\documentclass[sigconf,authorversion, nonacm, natbib=true]{acmart}
\settopmatter{authorsperrow=4}

\usepackage{booktabs} 
\usepackage{graphicx}
\usepackage{balance}  
\usepackage[ruled,vlined,linesnumbered]{algorithm2e}
\usepackage{algorithmic}
\usepackage{amsthm}
\usepackage{adjustbox}
\usepackage{xspace}
\usepackage{hyperref}
\usepackage{amsmath}
\usepackage{url}
\usepackage{autobreak}
\usepackage{colortbl}
\usepackage{color}
\usepackage{fancyvrb,xcolor}
\usepackage[center]{subfigure} 
\usepackage[mathscr]{euscript} 
\usepackage{makecell}
\usepackage{footnote} 
\usepackage{tablefootnote} 
\usepackage{mdwlist}
\usepackage{multirow}
\usepackage{enumitem}
\usepackage{lipsum}

\newcolumntype{s}{>{\columncolor{gray!30}} r}

\setlength{\floatsep}{0.01cm}
\setlength{\textfloatsep}{0.01cm}
\setlength{\intextsep}{0.01cm}
\setlength{\dblfloatsep}{0.01cm}
\setlength{\dbltextfloatsep}{0.01cm}
\setlength{\abovedisplayskip}{0.01cm}
\setlength{\belowdisplayskip}{0.01cm}
\setlength{\abovecaptionskip}{0.01cm}
\setlength{\belowcaptionskip}{0.01cm}
\AtBeginDocument{%
  }

\setcopyright{acmlicensed}
\copyrightyear{2025}
\acmYear{2025}
\acmDOI{XXXXXXX.XXXXXXX}

\acmConference[SIGIR'25]{The 48th International ACM SIGIR Conference on Research and Development in Information Retrieval}{July, 2025}{Padova, Italy}
\acmISBN{978-1-4503-XXXX-X/18/06}

\newtheorem{problem}{Problem}

\newtheorem{definition}{Definition}

\SetKwComment{Comment}{$\triangleright$\ }{}
\theoremstyle{remark}
\newtheorem*{remark}{Remark}
\definecolor{beige}{HTML}{FFFFCC}

\newcommand{\method}{\textsc{TCN}\xspace}

\newcommand{\costco}{\textsc{CostCo}\xspace}

\newcommand{\footnoteref}[1]{\textsuperscript{\ref{#1}}}

\newcommand{\hide}[1]{}


\newcommand{\mat}[1]{\mathbf{#1}}

\newcommand{\T}[1]{\boldsymbol{\mathcal{#1}}}

\newcommand\red[1]{\textcolor{red}{#1}}
\newcommand\blue[1]{\textcolor{blue}{#1}}

\setlength{\floatsep}{0.05cm}
\setlength{\textfloatsep}{0.05cm}
\setlength{\intextsep}{0.05cm}
\setlength{\dblfloatsep}{0.05cm}
\setlength{\dbltextfloatsep}{0.05cm}
\setlength{\abovedisplayskip}{0.05cm}
\setlength{\belowdisplayskip}{0.05cm}
\setlength{\abovecaptionskip}{0.05cm}
\setlength{\belowcaptionskip}{0.05cm}



\begin{document}

\title{
Tensor Convolutional Network for Higher-Order Interaction Prediction in Sparse Tensors
}

\author{Jun-Gi Jang}
\email{jungi@illinois.edu}
\affiliation{%
  \institution{University of Illinois}
  \city{Urbana-Champaign}  
  \country{USA}
}
\author{Jingrui He}
\email{jingrui@illinois.edu}
\affiliation{%
  \institution{University of Illinois}
  \city{Urbana-Champaign}  
  \country{USA}
}
\author{Andrew Margenot}
\email{margenot@illinois.edu}
\affiliation{%
  \institution{University of Illinois}
  \city{Urbana-Champaign}  
  \country{USA}
}
\author{Hanghang Tong}
\email{htong@illinois.edu}
\affiliation{%
  \institution{University of Illinois}
  \city{Urbana-Champaign}  
  \country{USA}
}

\renewcommand{\shortauthors}{Jang et al.}

\begin{abstract}
 Many real-world data, such as recommendation data and temporal graphs, can be represented as incomplete sparse tensors where most entries are unobserved.
For such sparse tensors, identifying the top-{\em k} higher-order interactions that are most likely to occur among unobserved ones is crucial.
Tensor factorization (TF) has gained significant attention in various tensor-based applications, serving as an effective method for finding these top-{\em k} potential interactions.
However, existing TF methods primarily focus on effectively fusing latent vectors of entities, which limits their expressiveness.
Since most entities in sparse tensors have only a few interactions, their latent representations are often insufficiently trained.
In this paper, we propose \method, an accurate and compatible tensor convolutional network that integrates seamlessly with existing TF methods for predicting higher-order interactions.
We design a highly effective encoder to generate expressive latent vectors of entities.
To achieve this, we propose to (1) construct a graph structure derived from a sparse tensor and (2) develop a relation-aware encoder, \method, that learns latent representations of entities by leveraging the graph structure.
Since \method complements traditional TF methods,
we seamlessly integrate \method with existing TF methods, enhancing the performance of predicting top-$k$ interactions.
Extensive experiments show that \method integrated with a TF method outperforms competitors, including TF methods and a hyperedge prediction method.
Moreover, \method is broadly compatible with various TF methods and GNNs (Graph Neural Networks), making it a versatile solution.
\end{abstract}

\keywords{Sparse tensor, tensor factorization, higher-order interaction prediction}


\maketitle

\section{Introduction}
\label{sec:intro}
Tensor, a multi-dimensional array, is a natural representation of real-world datasets that contain higher-order relationships.
An $N$-th order tensor of the size $I_1\times \cdots \times I_N = T$ has $T$ entries each of which has a value indexed by indices $(i_1,...i_n,...,i_N)$ where $i_n$ denotes $i_n$-th entity in $n$-th dimension.
Specifically, many real-world data such as recommendation data, knowledge graphs, social networks, and web logs can be represented as sparse tensors whose most entries are unobserved.
In other words, there are only a few observed interactions between entities.
For example, given higher-order recommendation data where entities of each dimension correspond to users, location, and timestamps, respectively, an $(i_1, i_2, i_3)$-th entry value indicates an implicit (or explicit) feedback that the $i_1$-th user is interested in the $i_2$-th location at the $i_3$-th timestamp.
Most entries are unobserved since most users provide feedback for only a few items at specific timestamps:
as summarized in Table~\ref{tab:Description}, 99.9984\% entries are unobserved on SG dataset~\cite{li2015rank} which is POI (Point-of-Interest) recommendation data.

Given an $N$-th order sparse tensor, it is crucial to infer potential interactions between $N$ entities $i_1,...,i_N$ on different dimensions.
We need to identify top-{\em k} interactions carefully among unobserved interactions since it is not the desired situation that all interactions in a sparse tensor are observed.
For example, given a 3rd-order tensor of recommendation data (user, item, time), 
it is not realistic for every user to interact with every item at every time slot.
Each user has distinct preferences only for a few specific items, and there are dedicated browsing time slots.
Therefore, observed interactions in a sparse tensor represent meaningful higher-order relationships between entities (e.g., users, items, and times).
Identifying top-{\em k} interactions has various real-world applications such as recommendation systems, temporal graph analysis, and knowledge graph completion.
In addition, one such application is an enhanced tensor completion where only significant entries among all unobserved entries are filled carefully; (1) we identify top-k interactions using an identification method and then (2) fill in the values for the top-k interactions using a completion method.

Then, how can we find top-{\em k} potential interactions between entities accurately?
For predicting top-$k$ potential interactions, we can use tensor factorization methods such as traditional methods (e.g., CP~\cite{carroll1970analysis,harshman1970foundations,kiers2000towards} and Tucker~\cite{tucker1966some} decomposition) and deep learning-based methods (e.g., \costco~\cite{liu2019costco} and NeAT~\cite{ahn2024neat}).
Tensor factorization has garnered significant attention across various applications such as tensor completion~\cite{tomasi2005parafac,acar2011scalable,smith2016exploration,liu2014generalized}, anomaly detection~\cite{JangK21,jang2023kdd}, recommendation~\cite{choi2019s3,chen2020neural}, knowledge graph completion~\cite{BalazevicAH19,LacroixOU20}, and so on~\cite{PerrosPWVSTS17,OhPSK18,chen2018drugcom}.
A commonly considered approach is to (1) design a score function using a tensor factorization method, (2) train its parameters to make observed interactions high scores, (3) compute scores of unobserved interactions using the score function, and (4) pick top-$k$ highest-scored ones.
However, traditional TF methods face a significant challenge for incomplete sparse tensors where the latent vectors of most entities with a limited number of interactions remain undertrained due to high sparsity.
This is because traditional TF methods typically focus on how to fuse multiple latent vectors effectively.

In this paper, we propose \method, an accurate \textbf{\underline{T}}ensor \textbf{\underline{C}}onvolutional \textbf{\underline{N}}etwork (TCN) that is fully compatible with a variety of tensor factorization methods for higher-order interaction prediction.
To address the challenge of traditional TF methods, our proposition is to (1) extract a graph structure from a given sparse tensor and (2) generate enriched latent vectors of entities by aggregating information over the graph.
Specifically, we construct a hypergraph structure from a given $N$-th order sparse tensor where its entities and interactions correspond to nodes and hyperedges of the size $N$, respectively.
Then we obtain a clique expansion graph from the hypergraph where each interaction (i.e., hyperedge) of an $N$-th order tensor can be viewed as a $N$-clique.
Next, we design \method that extracts relation-aware latent vectors of entities by neighborhood aggregation through the graph.
\method effectively addresses the issue of poorly learning the latent vectors for entities with limited interactions.
Since the output of \method can be readily utilized as the input of a tensor factorization method,
we construct an end-to-end model that consists of \method and a tensor factorization method.
\method is highly compatible with a variety of TF methods since \method and TF methods serve different but complementary roles.

The main contributions of this paper are summarized as follows:
\begin{itemize}[noitemsep,topsep=0pt]
	\item \textbf{A New Insight.} We provide a novel insight in a tensor-based problem: (1) a construction of hypergraph and its clique-expanded graph derived from a sparse tensor, and (2) a development for a relation-aware encoder that learns latent vectors of entities using the graph.
	\item \textbf{Accurate Method.} We propose \method, which is highly accurate in achieving the state-of-the-art method for top-$k$ higher-order interaction prediction.
	\item \textbf{Plug-and-Play.} The plug-in capability for various tensor factorization (See Table~\ref{tab:effectiveness}) is the strength of \method.
		In addition, our tensor-originated clique expansion graph enables tensor factorization models to employ various GNNs to boost their performance (See Table~\ref{tab:ablation}).
	\item \textbf{Evaluation.} We experimentally show \method consistently boosts the performance of various existing methods. Furthermore, \method with \costco significantly outperforms competitors on sparse tensors in terms of average precision.
\end{itemize}

\vspace{-2mm}

\section{Preliminaries}
\label{sec:prelim}
%

$\T{X}$ denotes a tensor, and an index $i_n$ indicates the $i_n$-th entity of the $n$-th dimension.
$(i_1,...,i_N)$ is an interaction between $N$ entities $i_1,...,i_N$, and $I_n$ is the size of the $n$-th dimension.
A set $\Omega_o$ of observed interactions and a set $\Omega_u$ of unobserved interactions are mutually disjoint sets whose union is $\Omega_a$ which is a set of all interactions in a sparse tensor.

\begin{figure*}[t]
	\centering	
	\vspace{-3mm}
	\includegraphics[width=0.75\textwidth]{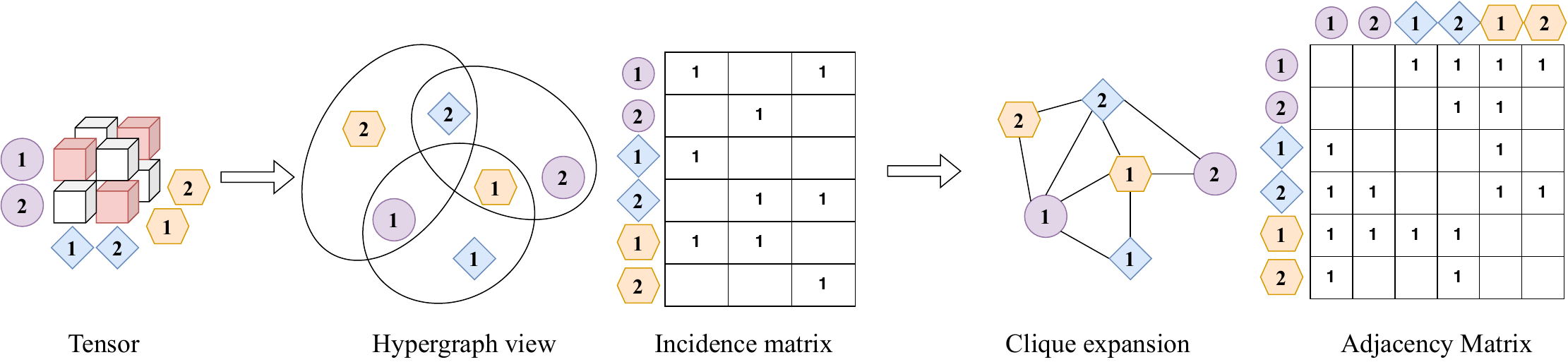}
	\caption{
	A hypergraph and its clique-expanded graph generated from a sparse tensor.
	See the detail in Section~\ref{subsec:hypergraph}.
 }
	\label{fig:graph_construction}
\end{figure*}

\subsection{Tensor Factorization}
\label{subsec:tensor_factorization}

Tensor factorization has been widely used for analyzing real-world tensors.
In the training phase, it learns decomposition results which minimizes a loss function (e.g., mean squared error (MSE) loss).
Refer to~\cite{liu2019costco}, tensor factorization learns (1) factor matrices $\mat{A}^{(1)},...,$ $\mat{A}^{(N)}$ corresponding to each dimension and (2) parameters $\theta$.
\begin{align}
\label{eq:tf_elementwise}
	\hat{\T{X}}(i_1,...,i_N) \leftarrow f(i_1,...,i_N;\mat{A}^{(1)},...,\mat{A}^{(N)}, \theta)
\end{align}
In the inference phase, given an interaction $(i_1,...,i_N)$, it predicts the value of $(i_1,...,i_N)$ by using the factor matrices and the parameters as shown in Equation~\eqref{eq:tf_elementwise}.

Now, let us look into a function $f$ of tensor factorization.
In Equation~\eqref{eq:tf_elementwise}, we can view tensor factorization as consisting of two modules:
(1) a one-hot encoder module that loads latent vectors $\mat{A}^{(n)}(i_n,:)^T$ for $n=1,...,N$ of entities $i_1$,...,$i_N$ from factor matrices and
(2) a predictor module that predicts a score using the latent vectors of the entities.
$\mat{A}^{(n)}(i_n,:)$ is a latent vector of the $i$-th entity of the $n$-th dimension.
Existing tensor factorization methods focus primarily on designing an effective predictor with latent vectors.
For example, given an interaction $(i_1,...,i_N)$, \costco utilizes a one-hot encoder that gets latent vectors of entities $i_1,...,i_N$ to construct the vertical concatenation $\vee_{n=1}^{N}(\mat{A}^{(n)}(i_n,:))\in\mathbb{R}^{N\times r}$.
Note that $\vee$ denotes the vertical concatenation of vectors.
Then, it utilizes CNN (Convolutional Neural Network) and MLP (Multi-Layer Perceptron) that capture complex relationships in the concatenation, for accurate prediction.

\subsection{Higher Order Interaction Prediction (HOIP)}
\label{subsec:hoir}

Before presenting the proposed method, we introduce top-$k$ higher-order interaction prediction.
We give the problem definition of higher-order interaction prediction addressed in this paper:
\begin{problem}\label{prob:main}
\textsc{\normalfont{(Top-$k$ higher-order interaction prediction)}} 
\textbf{Given} (1) a sparse tensor $\T{X} \in \mathbb{R}^{I_1 \times \cdots \times I_N}$, (2) a value $k$, \textbf{find} the top-$k$ interactions with the highest scores in the set $\Omega_u$.
\end{problem}

In this paper, we design a score function $f$ using tensor factorization, learn its parameters from a set $\Omega_o$ containing observed interactions, and identify the top-$k$ interactions based on this score function.

\textbf{Loss function.}
To identify potential higher-order interactions effectively, we use a pairwise loss function.
This loss function makes a model predict high scores for positive interactions while predicting low scores for negative interactions.
\begin{align}
	\label{eq:loss_function}
	\T{L} = \sum_{(e^{+}, e^{-})} {-\log \sigma(\hat{\T{X}}(e^{+}) - \hat{\T{X}}(e^{-}))} 
\end{align}
where $e^{+}$ is an observed interaction $(i_1,...,i_N)$ in a set $\Omega_o$ of observed interactions, $e^{-}$ is a negative interaction with respect to $e^{+}$, and $\sigma$ is the sigmoid function.
Note that we construct a set of negative interactions randomly sampled from the set $\Omega_u$ of unobserved interactions.
Our work is complementary to existing work focusing on designing the prediction function $f$ including an effective negative sampling strategy since our proposed model is compatible with a variety of existing functions $f$.

\textbf{Training and Inference.}
We introduce how we use tensor factorization in the top-$k$ higher-order interaction prediction.
There are the training and inference phases of tensor factorization for the higher-order interaction prediction.

\begin{itemize}
	\item \textbf{Training Phase:} given positive observed interactions in training data and negative sampled interactions, a tensor factorization model learns (1) latent vectors of entities, (2) the parameters of a tensor factorization predictor that predicts high scores for the positive interactions while predicting low scores for the negative interactions.
	\item \textbf{Inference Phase:} a trained model (1) predicts scores of unobserved interactions by using the learned latent vectors and predictor, (2) sorts the scores, and (3) picks top-$k$ potential interactions based on the scores.
\end{itemize}

\begin{remark}
    There are two perspectives on tensor factorization: (1) given an interaction $(i_1,...,i_N)$, tensor factorization loads the latent vectors of the entities $i_1,...,i_N$ and predicts a value for the interaction by effectively fusing them.
    (2) Given a tensor, tensor factorization decomposes the tensor into factor matrices and parameters.
    In the first view, tensor factorization consists of encoder and predictor modules: a one-hot encoder loads the latent vectors of entities, and a predictor predicts a value using them.
    In the second view, tensor factorization serves as an encoder that projects a tensor into low-rank factor matrices and parameters.
    In this paper, we consider tensor factorization from the first perspective that it consists of encoder and predictor modules.
\end{remark}

\section{Proposed Method}
\label{sec:proposed}

In this section, we propose \textbf{\underline{T}}ensor-originated Graph \textbf{\underline{C}}onvolutional \textbf{\underline{N}}etwork (\method) which is accurate and highly compatible with tensor factorization methods for higher-order interaction prediction.
In predicting top-{\em k} higher-order interactions, traditional TF methods struggle to adequately learn latent vectors for entities since most entities have limited interactions to learn their vectors.
To overcome this limitation, our proposition is to extract a graph structure inherent in a sparse tensor and learn enriched latent vectors by neighborhood aggregation through the graph.
In Section~\ref{subsec:hypergraph}, we describe how to construct a hypergraph from a sparse tensor and its clique-expanded graph.
In Section~\ref{subsec:tcn}, we design a tensor convolutional network that outputs relation-aware latent vectors of entities of all dimensions over the clique-expanded graph.
In Section~\ref{subsec:tcn_based_tf}, we present how to integrate \method with tensor factorization methods.

\subsection{Tensor-originated Graph Construction}
\label{subsec:hypergraph}

Our goal is to extract a graph inherent in a sparse tensor for designing an expressive encoder of tensor factorization.
To obtain enriched latent vectors even for entities with limited interactions, we aggregate information from neighbor entities in the second step. 
However, we lack an effective graph structure for propagating information.
Since we utilize a model trained based on observed interactions to predict potential interactions, we need to find hidden relationships between entities from the given sparse tensor.
To find the effective graph, our proposition is to \textbf{(1) \textit{transform a sparse tensor into a hypergraph}} which can represent higher-order relationships between nodes, and then \textbf{(2) \textit{obtain a clique-expanded graph}} from the hypergraph.

\subsubsection{\textbf{Tensor-originated hypergraph.}}
We first transform a sparse tensor into a hypergraph by viewing the entities and interactions of a sparse tensor as nodes and edges of a hypergraph, respectively.
A tensor-originated hypergraph is defined as follows:
\begin{definition}[Tensor-originated Hypergraph $G_H$]
Given a set $\Omega_o$ of observed interactions in an $N$-th order sparse tensor of the size $I_1 \times \cdots \times I_N$,
a tensor-originated hypergraph $G_H = (V, E)$ consists of a set $V$ of nodes and a set $E$ of hyperedges corresponding to the entities and the observed interactions of the tensor, respectively, where the numbers $|V|$ and $|E|$ of nodes and hyperedges are equal to $I_1 + \cdots + I_N$ and $|\Omega_o|$, respectively.
The element $b_{m,n}$ of its incidence matrix $\mat{B} \in \{0,1\}^{|V|\times |E|}$ is $1$ if the $m$-th node is incident with the $n$-th hyperedge and $0$ otherwise.
In other words, the entity corresponding to the $m$-th node is included in the observed interaction corresponding to the $n$-th hyperedge.
\end{definition}
\noindent A tensor-originated hypergraph $G_H$ is an $N$-uniform hypergraph whose edges have the same size $N$.
In addition, there are no nodes corresponding to entities of the same dimension in each edge.
Figure~\ref{fig:graph_construction} illustrates an example of constructing a hypergraph.

Next, we examine the time and space complexities of constructing a hypergraph from a sparse tensor.

\begin{proposition}[Time and space complexities of hypergraph construction]
\label{PROP:time_space_complexity_hypergraph}
Assume that the number of observed interactions of a given sparse tensor is $|E|$ and the size of the tensor is $I_1  \times \cdots \times I_N$. Then, the time and space complexities for constructing a matrix of a hypergraph is $O(N|E|)$ when we use a COO (Coordinate list) format to deal with the hypergraph matrix.
\end{proposition}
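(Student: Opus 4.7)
The plan is to directly count the nonzeros in the incidence matrix $\mathbf{B}$ and to exhibit a linear-scan construction algorithm whose cost matches that count. Since the tensor-originated hypergraph is $N$-uniform by construction (each observed interaction $(i_1,\ldots,i_N)$ gives a hyperedge containing exactly one node per dimension), every hyperedge contributes precisely $N$ ones to $\mathbf{B}$. Summing over the $|E|$ hyperedges, the total number of nonzero entries of $\mathbf{B}$ is exactly $N|E|$, which is the quantity we must pay for in COO storage.

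First I would set up a global node indexing by precomputing the prefix offsets $s_n := \sum_{k=1}^{n-1} I_k$ for $n=1,\ldots,N$ in $O(N)$ time, so that the entity $i_n$ in dimension $n$ is mapped to global node id $s_n + i_n$ in $O(1)$. Then I would describe the construction: iterate once through the list of observed interactions (indexed $e = 1,\ldots,|E|$); for each interaction $(i_1,\ldots,i_N)$ append the $N$ triples $(s_n + i_n,\; e,\; 1)$ for $n=1,\ldots,N$ into the COO buffer. This inner loop is $O(N)$ work and uses $O(N)$ space per interaction, so the outer loop accumulates $O(N|E|)$ time and $O(N|E|)$ space in total, dominating the $O(N)$ preprocessing cost and the $O(N)$ storage of the offset array.

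Finally I would argue optimality of the bound: since the COO representation records one triple per nonzero, and we already counted $N|E|$ nonzeros, the space bound $O(N|E|)$ is tight, and any algorithm that writes down $\mathbf{B}$ in COO must spend $\Omega(N|E|)$ time to emit those entries, so the stated complexity is both achieved and unavoidable in this format. I do not anticipate a real obstacle here; the only subtlety is bookkeeping for the global node id across dimensions, which the offset trick handles cleanly, and verifying that the $N$-uniform property (no repeated nodes within a hyperedge) implies the per-hyperedge contribution is exactly $N$ rather than an upper bound.
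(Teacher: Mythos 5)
Your proposal is correct and follows essentially the same route as the paper's proof: both arguments reduce to the observation that each of the $|E|$ hyperedges (one per observed interaction) contributes exactly $N$ nonzero entries to the incidence matrix, so the COO representation stores $N|E|$ index triples, yielding $O(N|E|)$ time and space. Your version merely adds explicit bookkeeping (the prefix-offset node indexing) and a tightness remark that the paper omits.
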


\begin{proof}
    The number of edges of a hypergraph is the same number of interactions of a sparse tensor.
    For a COO format, we need to store $N|E|$ indices for the hypergraph, and hence the space and time complexities are $O(N|E|)$.
\end{proof}

The incidence matrix of a tensor-originated hypergraph is also sparse.
Note that the number $N|E|$ of nonzeros is much smaller than the size $(I_1\times\cdots\times I_N)\times |E|$ of the matrix.


\subsubsection{\textbf{Tensor-originated clique-expanded graph.}}
Next, we aim to find a clique-expanded graph from a hypergraph $G_H$.
We would utilize a tensor-originated hypergraph $G_H$ for a relation-aware encoder,
but an efficiency issue occurs since unnecessary embedding vectors of edges are generated when we use a tensor-tailored predictor after the encoder.
This is because the number of observed interactions is much larger than the sum of the number of entities for a sparse tensor:
for example, on SG data, $|\Omega_o| = 105,764$ is $12.5\times$ larger than the sum $I_1+I_2+I_3 = 8,399$ of the number of entities for a sparse tensor.
To achieve high efficiency, our proposition is to exploit clique expansion that projects a hypergraph into a pairwise graph.
Specifically, we generate a clique-expanded graph $G_C$ from $G_H$ as follows:

\begin{definition}[Tensor-originated Clique-expanded Graph $G_C$]
We define a tensor-originated clique-expanded graph as $G_C (V, E_C,$ $ w)$ where $E_C$ is a set of its edges and $w: E_C \rightarrow \mathbb{Z}$ is a weight function.
An edge weight $w(i,j)$ in $G_C$ is equal to the number of the hyperedges that include both the $m$-th node and the $n$-th node.
The adjacency matrix $\mat{A}\in \mathbb{R}^{|V| \times |V|}$ of the clique-expanded graph without self-loop where $|V|$ is equal to $I_1 + \cdots + I_N$:
\begin{align}
\label{eq:ceg}
    \mat{A} = \mat{B}\mat{B}^T - (\textrm{diag}(\mat{B}\mat{B}^T) \mathbf{1}^T) \circ \mat{I}
\end{align}
where $\textrm{diag}(\mat{B}\mat{B}^T) \in \mathbb{R}^{|V|}$ is a column vector consisting of diagonal terms of $\mat{B}\mat{B}^T$, $\mathbf{1} \in \mathbb{R}^{|V|}$ is a column vector whose all elements are $1$, $\mat{I} \in \mathbb{R}^{|V| \times |V|}$ is an identity matrix, and $\circ$ is Hadamard product.
The element $\mat{A}(i,j) (i\neq j)$ is $w(i,j)$ which is equal to the number of the interactions that include both entities corresponding to the nodes $i$ and $j$ among observed interactions in $\Omega_o$.
$\mat{A}(n,n)$ is equal to $0$.
\end{definition}

\paragraph{Example.}
Figure~\ref{fig:graph_construction} illustrates an example of constructing a hypergraph and its clique-expanded graph from a sparse tensor.
There is a $3$rd-order sparse tensor that includes three observed interactions (1,1,1), (2,2,1), and (1, 2, 2).
We construct a hypergraph that has $6$ nodes and $3$ edges and then a clique-expanded graph that has $9$ edges with $4$ $3$-cliques where $3$ $3$-cliques correspond to the observed interactions and $1$ $3$-clique (i.e., (1, 2, 1)) is an additional clique that arises from representing the tensor as the clique-expanded graph.

Next, we examine the time and space complexities of constructing a tensor-originated clique-expanded graph.

\begin{proposition}[Space complexity of clique-expanded graph construction]
\label{PROP:time_space_complexity_hypergraph}
The space complexity for constructing a tensor-originated clique-expanded graph $\mathbf{A} \in \mathbb{R}^{|V|\times |V|}$ is $O(N^2|E|)$ where $N$ is the order of a sparse tensor and $|E|$ is the number of observed interactions in a sparse tensor.
\end{proposition}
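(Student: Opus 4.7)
The plan is to bound the number of nonzero entries of $\mat{A}$ and argue that this bound governs the storage cost when $\mat{A}$ is kept in a sparse format (analogous to the COO argument used in the preceding proposition for the incidence matrix $\mat{B}$).

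First, I would recall from the definition of $G_H$ that every hyperedge has exactly $N$ incident nodes, so the incidence matrix $\mat{B}\in\{0,1\}^{|V|\times|E|}$ has exactly $N|E|$ nonzeros, with $N$ nonzeros per column. Next, I would analyze the product $\mat{B}\mat{B}^T$ appearing in Equation~\eqref{eq:ceg}: each hyperedge $e$ (i.e., each column of $\mat{B}$) contributes a nonzero to entry $(i,j)$ of $\mat{B}\mat{B}^T$ only when both nodes $i$ and $j$ are incident to $e$, and there are exactly $N\cdot N = N^2$ such ordered pairs per hyperedge. Summing over all hyperedges, the number of nonzero contributions to $\mat{B}\mat{B}^T$ is at most $N^2 |E|$, and since subtracting the diagonal term $(\mathrm{diag}(\mat{B}\mat{B}^T)\mathbf{1}^T)\circ \mat{I}$ only zeros out entries, the adjacency matrix $\mat{A}$ has at most $N^2|E|$ nonzero entries as well.

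Finally, I would invoke the same COO-format argument used in the hypergraph proposition: storing a sparse matrix in COO requires space proportional to the number of nonzeros plus a constant per entry (row index, column index, and weight). Therefore the space needed to represent $\mat{A}$ is $O(N^2|E|)$, which is the claimed bound.

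The main obstacle, such as it is, is a bookkeeping point rather than a deep issue: one must be careful that counting ordered pairs per hyperedge gives $N^2$ (not $\binom{N}{2}$), and that duplicated contributions across different hyperedges add to the weight $w(i,j)$ rather than creating additional nonzeros, so the count $N^2|E|$ is an \emph{upper bound} on the number of stored entries and the bound is tight up to constants in the worst case (e.g., when all hyperedges are pairwise node-disjoint). A brief remark to this effect will make the argument airtight.
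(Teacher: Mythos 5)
Your argument is correct and is essentially the same as the paper's: both bound the number of nonzeros of $\mat{A}$ by $O(N^2|E|)$ using the fact that each column of $\mat{B}$ has exactly $N$ nonzeros, and then conclude that sparse storage costs space proportional to that count. The only (immaterial) difference is that you organize the double count per hyperedge (each hyperedge contributes at most $N^2$ ordered node pairs), whereas the paper organizes it per column of $\mat{A}$ (column $j$ receives at most $N n_j$ nonzeros, summed to $N\sum_j n_j = N^2|E|$); these are the same sum evaluated in a different order.
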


\begin{proof}
We start from computing the number of nonzeros in a $j$-th column of the matrix $\mathbf{A}$ (i.e., $\mathbf{A}[:,j]$).
We can obtain $\mathbf{A}[:,j]$ by computing $\mathbf{B}\times \mathbf{B}^T[:,j]$.
$\mathbf{A}[:,j]$ has at most $N\times n_j$ nonzeros where $n_j$ is the number of nonzeros in $\mathbf{B}^T[:,j]$.
This is because (1) we only consider the columns of $\mathbf{B}$ corresponding to the rows associated with nonzero elements in $\mathbf{B}^T[:,j]$, (2) $\mathbf{B}^T[:,j]$ has $n_j$ nonzeros, and (3) each column of $\mathbf{B}$ has $N$ nonzeros.
In other words, since each column of $\mathbf{B}$ can generate up to $N$ nonzeros in $\mathbf{A}[:,j]$, and we utilize $n_j$ columns of $\mathbf{B}$ to create nonzeros in $\mathbf{A}[:,j]$, the number of nonzeros in $\mathbf{A}[:,j]$ is at most $N\times n_j$.
Then, the space complexity of a matrix $\mathbf{A}$ is $O(\sum_{j=1}^{|V|}{N\times n_j}) = O(N\times (n_1 + ... + n_{|V|})) = O(N^2|E|)$ since $(n_1 + ... + n_{|V|})$ is equal to $N|E|$ which is the number of nonzeros in $\mathbf{B}^T$.
\end{proof}

\begin{proposition}[Time complexity of clique-expanded graph construction]
\label{PROP:time_space_complexity_hypergraph}
The time complexity for constructing a tensor-originated clique-expanded graph $\mathbf{A} \in \mathbb{R}^{|V|\times |V|}$ is $O(N^2|E|)$ where $N$ is the order of a sparse tensor and $|E|$ is the number of observed interactions in a sparse tensor.
\end{proposition}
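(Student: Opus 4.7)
The plan is to mirror the space complexity proof, but now track the number of arithmetic/indexing operations rather than the number of stored nonzeros, since once we bound the work for sparse matrix multiplication on $\mathbf{B}$, the subtraction of the diagonal correction in Equation~\eqref{eq:ceg} is negligible.

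First, I would reduce the problem to analyzing the cost of the sparse product $\mathbf{B}\mathbf{B}^T$. The additional term $(\text{diag}(\mathbf{B}\mathbf{B}^T)\mathbf{1}^T)\circ \mathbf{I}$ only touches the $|V|$ diagonal entries, so extracting those diagonal values from the already-computed product and subtracting them costs $O(|V|)$, which is absorbed by $O(N^2|E|)$ since $|V|=I_1+\cdots+I_N \le N|E|$ (every entity that appears at all appears in some observed interaction, and otherwise we may drop isolated entities without loss). So the whole question reduces to bounding the cost of computing $\mathbf{B}\mathbf{B}^T$ in sparse format.

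Next, I would bound this cost column by column, reusing the notation from the previous proposition: let $n_j$ denote the number of nonzeros in $\mathbf{B}^T[:,j]$, i.e., the number of hyperedges incident to node $j$. Computing the $j$-th column $\mathbf{A}[:,j] = \mathbf{B}(\mathbf{B}^T[:,j])$ in the standard sparse-matrix-times-sparse-vector fashion requires, for each of the $n_j$ nonzero rows of $\mathbf{B}^T[:,j]$, scanning the corresponding column of $\mathbf{B}$, which has exactly $N$ nonzeros since each hyperedge contains $N$ nodes. Each such scan contributes $O(N)$ work for accumulation into a dense or hashed output column, so the total cost for column $j$ is $O(N\cdot n_j)$. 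Summing over all $|V|$ columns yields
\begin{align*}
\sum_{j=1}^{|V|} O(N\cdot n_j) \;=\; O\!\Bigl(N\sum_{j=1}^{|V|} n_j\Bigr) \;=\; O(N\cdot N|E|) \;=\; O(N^2|E|),
\end{align*}
using the identity $\sum_j n_j = \text{nnz}(\mathbf{B}^T) = N|E|$ established in the earlier proposition.

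The main obstacle I anticipate is justifying the per-column $O(N\cdot n_j)$ bound carefully, since naive sparse-times-sparse multiplication can incur extra $\log$ factors if one sorts the accumulated indices, or quadratic blowup if merges are implemented poorly. I would handle this by explicitly specifying the accumulator: either (i) a length-$|V|$ dense scratch vector that is zeroed only at touched indices (SPA-style accumulation, giving $O(N\cdot n_j)$ exactly), or (ii) a hash map with expected $O(1)$ insert/lookup. Either choice delivers the claimed bound without $\log$ factors, after which the column-sum argument above completes the proof.
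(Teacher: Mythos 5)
Your proposal is correct and follows essentially the same argument as the paper: both bound the cost of the sparse product $\mathbf{B}\mathbf{B}^T$ column by column, charging $O(N\cdot n_j)$ to column $j$ because each of the $n_j$ incident hyperedge columns of $\mathbf{B}$ has exactly $N$ nonzeros, and then summing via $\sum_j n_j = N|E|$. Your added care about the diagonal correction term and the choice of accumulator (to avoid logarithmic factors) are details the paper leaves implicit, but they do not change the route.
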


\begin{proof}
The dominant computation in Equation~\eqref{eq:ceg} is $\mat{B}\mat{B}^T$.
As described in the space complexity, for computing $\mathbf{A}[:,j] = \mat{B}\mat{B}^T[:,j]$, we utilize (1) $\mathbf{B}^T[:,j]$ and (2) the $n_j$ columns of $\mathbf{B}$ corresponding to the rows associated with nonzero elements in $\mathbf{B}^T[:,j]$.
Note that $n_j$ is the number of nonzeros in $\mathbf{B}^T[:,j]$.
Let us denote the positions of the nonzeros in $\mathbf{B}^T[:,j]$ as $p_1,...,p_{n_j}$.
Then we construct the sub-matrix $\mathbf{B}_j \in \mathbb{R}^{|V|\times n_j}$ by selecting columns indexed by $p_1,...,p_{n_j}$ from matrix $\mat{B}$.
The resulting sub-matrix $\mathbf{B}_j$ has $N\times n_j$ non-zero entries.
Also, let a vector composed of the nonzero elements of $\mathbf{B}^T[:,j]$ be $\mathbf{b}_j \in \mathbb{R}^{n_j}$.
Then, the time complexity for $\mathbf{A}[:,j]$ is $O(N \times n_j)$ since sparse matrix-vector multiplication between the sub-matrix $\mathbf{B}_j$ and the vector $\mathbf{b}_j$ requires $O(N\times n_j)$ time.
Thus, the time complexity for $\mathbf{A}$ is $O(\sum_{j=1}^{|V|}{N\times n_j}) = O(N\times (n_1 + ... + n_{|V|})) = O(N^2|E|)$ since $(n_1 + ... + n_{|V|})$ is equal to $N|E|$ which is the number of nonzeros in $\mathbf{B}^T$.
\end{proof}

Note that $N$ is usually a very small number (e.g., $N = 3$ or $4$ in our setting).
Furthermore, $N^2$ term is the worst case in the complexities of clique-expanded graph construction and can be reduced depending on the sparsity pattern of a hypergraph.
We experimentally show the time and space costs of the clique-expanded graph construction in Section~\ref{subsec:time_and_space}.


\subsection{Tensor-originated Graph Convolutional Network (\method)}
\label{subsec:tcn}
We propose \method (Tensor-originated Graph Convolutional Network) which produces relation-aware latent vectors of entities.
To exploit a clique-expanded graph obtained from a sparse tensor, our proposition is to propagate information over the graph to enrich the latent vectors of entities.
It allows us to effectively learn the latent vectors for entities that have a limited number of interactions.
Given a tensor-originated clique-expanded graph $G_C$ and the vertical concatenation $\mat{F}^{[0]} = \vee_{n=1}^{N}(\mat{A}^{(n)}) \in \mathbb{R}^{(I_1 + \cdots + I_N) \times r}$ of factor matrices,
a relation-aware encoder outputs expressive representations of entities by effectively aggregating information of their neighbors.
Inspired by~\cite{kipf2016semi,he2020lightgcn}, we adopt a propagation rule that propagates node information layer by layer.
Given the adjacency matrix $\mat{A}$ of a tensor-originated clique-expanded graph $G_C$ and $\mat{F}^{[l]}$ of the $l$-th layer,
\method generates $\mat{F}^{[l+1]}$ of the $l+1$-th layer in Figure~\ref{fig:tcn}:
\begin{align}
    \mat{F}^{[l+1]} = \T{P}(\mat{A}, \mat{F}^{[l]})
\end{align}
where $\T{P}$ is a propagation function that propagates latent vectors of entities over a graph.
There are many GNN-based propagation functions~\cite{kipf2016semi,gilmer2017neural,hamilton2017inductive,velivckovic2017graph,wu2019simplifying,zhu2020simple,li2022g} according to~\cite{yoo2023less}, and we can choose one of them as our work is complementary to the existing works related to the GNN-based propagation functions.

\begin{figure}[t]
	\centering	
	\includegraphics[width=0.45\textwidth]{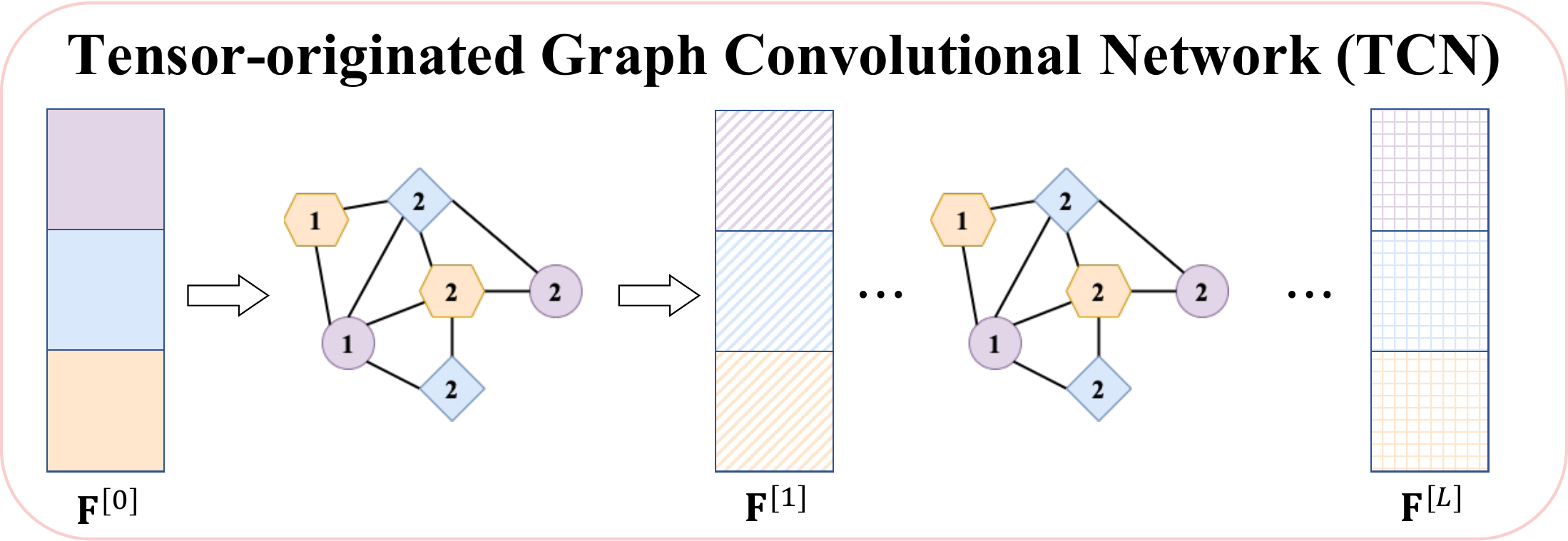}
	\caption{
	Overview of \method. 
	}
	\label{fig:tcn}
\end{figure}

When designing the propagation function $\T{P}$, it is crucial to consider that in this problem, both the parameters in the function and the input $\mat{F}^{[l]}$ are learned.
This differs from conventional graph problems that utilize non-learnable input features.
Due to the learnable input $\mat{F}^{[l]}$, a high-complexity function can cause training instability.
Thus, to mitigate it, we propose a simple but accurate linear model for propagating latent vectors over the graph as follows:
\begin{align}
\small
    \mat{F}^{[l+1]} = (\mat{D}^{-1/2}{\mat{A}}\mat{D}^{-1/2})\mat{F}^{[l]} = \tilde{\mat{A}}\mat{F}^{[l]}
\end{align}
where $\mat{D}$ is a diagonal matrix whose diagonal entry $\mat{D}(i,i)$ is equal to $\sum_{j=1}^{|V|}{\mat{A}(i,j)}$.
\begin{align}
\small
    \mat{F}^{[l+1]}(i,:) = \sum_{j\in \T{N}_i}{\frac{\mat{A}(i,j)}{\sqrt{\mat{D}(i,i)}\sqrt{\mat{D}(j,j)}}}\mat{F}^{[l]}(j,:)
\end{align}
where ${\T{N}_i}$ is a set of neighbor nodes of the $i$-th node.
In principle, we can use any existing GNN in \method, but we construct \method as a linear model $\tilde{\mat{A}}\mat{F}^{[l]}$ consisting of only feature propagation since it provides higher performance than complex models.
We will show the comparison in Section~\ref{subsec:ablation}.

\vspace{-2mm}
\subsection{Integration of \method with Tensor Factorization}
\label{subsec:tcn_based_tf}
We aim to integrate \method with tensor factorization.
Traditional TF methods focus only on effectively fusing multiple latent vectors, which may result in undertrained latent vectors in highly sparse tensors.
In contrast to them, \method prioritizes generating expressive latent vectors of entities even for a limited number of interactions, by aggregating information of neighbors of entities.
Therefore, \method effectively enhances the performance of traditional TF methods since \method and traditional TF methods serve complementary roles.
In other words, when \method and TF methods are used together, they exhibit significant synergy, yielding superior results that are up to several orders of magnitude better than those of traditional TF methods (See Table~\ref{tab:effectiveness}).
Furthermore, another benefit of \method is that the integration with \method is not limited to a specific TF method but extends to a variety of TF methods as well (See Section~\ref{subsec:compatibility}).

We consider utilizing outputs $\mat{F}^{[l]}$ for $l=0,...,L$ as an input of a tensor factorization predictor.
This contrasts with previous tensor factorization methods that use only $\mat{F}^{[0]}$ which is the vertical concatenation of initial factor matrices $\mat{A}^{(n)}$.
Since the size of each $\mat{F}^{[l]} \in \mathbb{R}^{(I_1+\cdots+I_N)\times r}$ is the same as that of $\mat{F}^{[0]}  \in \mathbb{R}^{(I_1+\cdots+I_N)\times r}$, we easily construct an input using $\mat{F}^{[l]}$ with simple operations (e.g., sum, mean, element-wise product, and concatenation).

\textbf{Sum, mean, and element-wise product.}
We first introduce how to construct an input using $\mat{F}^{[l]}$ with the simple operations, sum, mean, and element-wise product.
Let us assume that we define an input as $\mat{F}_{\textrm{final}}$.
Previous tensor factorization methods utilize $\mat{F}_{\textrm{final}}  \leftarrow \mat{F}^{[0]}$
while we obtain $\mat{F}_{\textrm{final}}$ as follows:
\begin{align}
\label{eq:general_operation}
    \mat{F}_{\textrm{final}} \leftarrow g*(\mat{F}^{[0]} \star \mat{F}^{[1]} \star \cdots \star \mat{F}^{[L]})
\end{align}
where $L$ is the number of \method layers, $g$ is a scalar value, and $\star$ represents a general operation that includes sum and element-wise product.
The sum and mean operation set $\star$ operation to the summation (i.e., $+$ operation) while those set $g$ to $1$ and $1/(L+1)$, respectively.
Or, we can choose the element-wise product for $\star$ operation.
For a given interaction $(i_1,...,i_N)$, we utilize $\mat{F}_{\textrm{final}}(i_n+S_n,:))$ instead of $\mat{A}^{(n)}(i_n,:)$ where $S_n$ is equal to $\sum_{s=0}^{n-1}{I_n}$ $(I_0 = 0)$.
The advantage of these operations is that the size of $\mat{F}_{\textrm{final}}$ is the same as that of $\mat{F}^{[0]}$, and thus we do not have to consider adjusting the hyperparameters of a tensor factorization predictor.

\textbf{Concatenation.}
Next, we introduce the concatenation operation with $\mat{F}^{[l]}$ for $l=0,...,L$.
One approach is to choose the horizontal concatenation $\|$ for $\star$ in Equation~\eqref{eq:general_operation} while setting $g$ to 1.
The size of $\mat{F}_{\textrm{final}} \leftarrow \|_{l=0}^{L}{\mat{F}^{[l]}}$ is $(I_1 + \cdots I_N) \times (L+1)r$.
Another approach is to construct $\T{F}_{\textrm{final}} \in \mathbb{R}^{(I_1+\cdots + I_N)\times r \times (L+1)}$ by stacking $\mat{F}^{[l]}$ along the third dimension:
the tensor $\T{F}_{\textrm{final}}$ is tailored for \costco and NeAT which utilize $\vee_{n=1}^{N}(\mat{A}^{(n)}(i_n,:))\in\mathbb{R}^{N\times r}$ for a given interaction $(i_1,...,i_N)$.
Then, for \costco and NeAT, we use  $\vee_{n=1}^{N}(\T{F}_{\textrm{final}}(i_n+S_n,:,:))^T\in\mathbb{R}^{(L+1)N\times r}$ as an input where $\T{F}_{\textrm{final}}(i_n+S_n,:,:))^T$ is a matrix of the size $(L+1)\times r$ and $S_n$ is equal to $\sum_{s=0}^{n-1}{I_n}$ $(I_0 = 0)$.
The concatenation operation provides a more enriched input to a tensor factorization (TF) predictor than the other operations.
However, it requires adjustment for hyperparameters of the TF predictor and increases the size of the TF predictor.
In this paper, we utilize the concatenation to construct an input from the outputs of \method and provide experimental results for comparison between the four operations in Section~\ref{subsec:ablation}.



\vspace{-2mm}
\section{Experiments}
\label{sec:experim}

\begin{table}[t!]
	\caption{Description of real-world tensor datasets.
	For each dataset, we split all observed interactions into train, valid, and test sets with the ratio 7:1:2. $(0.\#\#\#\#\%)$ indicates the density of a sparse tensor.
	}
	\centering
	\label{tab:Description}
		\resizebox{0.49\textwidth}{!}{
	\begin{tabular}{lrrrrr}
		\toprule
		\textbf{Dataset} & \textbf{Tensor Size} & \textbf{$\#$ observed interactions} & \textbf{$\#$ unobserved interactions}\\
		\midrule
		 ML-100k\footnoteref{foot:ml} & $943\times 1,682 \times 31 \times 8$ & $100,000$ $(0.0254\%)$ & $393,259,248$ \\ 	
		 SG\footnoteref{foot:sg}~\cite{li2015rank} & $2321\times 5596 \times 482$ & $105,764$ $(0.0016\%)$ & $6,260,326,006$\\
		 Gowalla\footnoteref{foot:gowalla}~\cite{cho2011friendship} & $9,652 \times 12,705 \times 296$ & $842,173$ $(0.0023\%)$ & $36,297,241,187$\\   
		 UCI\footnoteref{foot:uci} & $1,350\times 1,862 \times 7$ & $22,640$ $(0.1286\%)$ & $17,573,260$\\		 
		 UMLS\footnoteref{foot:umls} & $135\times 46 \times 132$ & $6,529$ $(0.7964\%)$ & $817,109$\\	
		 NYC\footnoteref{foot:nyc} & $1,084\times 38,334 \times 7,641$ & $225,701$ $(0.00007\%)$ & $317,514,316,195$ \\			
		 Yelp\footnoteref{foot:yelp} & $70,818\times 15,580 \times 109$ & $333,481$ $(0.00027\%)$ & $120,264,210,479$\\		
		 Yahoo\footnoteref{foot:yahoo} & $82,309\times 82,308 \times 168$ & $785,749$ $(0.00007\%)$ & $1,138,146,995,147$ \\
		\bottomrule
	\end{tabular}}
\end{table}

In this section, we aim to answer the following questions:
\begin{itemize}
	\item[Q1] \textbf{Compatibility of \method (Section~\ref{subsec:compatibility}).} Is \method applicable to various tensor factorization methods effectively?
	\item[Q2] \textbf{Performance (Section~\ref{subsec:performance}).} To what extent does \method achieve higher performance than competitors? 
	\item[Q3] \textbf{Ablation study (Section~\ref{subsec:ablation}).} How much do the operations used (or not used) in \method affect its performance?
	\item[Q4] \textbf{Impact of \method for orders (Section~\ref{subsec:impact_order}).} How does the impact of \method vary depending on different tensor orders?
        \item[Q5] \textbf{Computational time of \method (Section~\ref{subsec:time}).} How much does \method provoke overhead for computational time? 
        \item[Q6] \textbf{Time and Space Requirements for Graphs (Section~\ref{subsec:time_and_space}).} How much time and space do we require for constructing hypergraphs and clique-expanded graphs?
\end{itemize}

\begin{table*}[!h]
\caption{Compatibility of \method for AP@$k$.
The triangles \red{$\blacktriangle$} and \blue{$\blacktriangledown$} indicate performance improvement and degradation, respectively.
A bold text indicates the best performance.
With a few exceptions, \method further near-universally improves the performance of existing tensor factorization methods.
$-$ indicates a value less than $0.0001$.
}
\label{tab:effectiveness}
\resizebox{0.99\textwidth}{!}{
\begin{tabular}{ll|rr|rr|rr|rr|rr|rr|r}
\toprule
\multicolumn{2}{c|}{Setting}                              & \multicolumn{2}{c|}{CP}                               & \multicolumn{2}{c|}{Tucker}                                   & \multicolumn{2}{c|}{NCFT} & \multicolumn{2}{c|}{M$^2$DMTF}   & \multicolumn{2}{c|}{NeAT}            & \multicolumn{2}{c|}{CostCo}                                   & \multicolumn{1}{c}{\multirow{2}{*}{\begin{tabular}[c]{@{}c@{}}Average\\ Improvement\end{tabular}}} \\
\multicolumn{1}{c}{Dataset} & \multicolumn{1}{c|}{Metric} & 
\multicolumn{1}{c}{w/o \method} & \multicolumn{1}{c|}{w/ \method} & 
\multicolumn{1}{c}{w/o \method} & \multicolumn{1}{c|}{w/ \method} & 
\multicolumn{1}{c}{w/o \method} & \multicolumn{1}{c|}{w/ \method} & 
\multicolumn{1}{c}{w/o \method} & \multicolumn{1}{c|}{w/ \method} & 
\multicolumn{1}{c}{w/o \method} & \multicolumn{1}{c|}{w/ \method} & 
\multicolumn{1}{c}{w/o \method} & \multicolumn{1}{c|}{w/ \method} & \multicolumn{1}{c}{}                                     \\
\midrule
\multirow{3}{*}{ML-100k} 
& AP@100                      & 0.0271                 &       0.0275\red{$\blacktriangle$}                   & 0.0566                     & 0.1309\red{$\blacktriangle$}                           & 0.0553                   &         0.1586\red{$\blacktriangle$}              & 0.0459 & 0.0943\red{$\blacktriangle$} & 
	0.0806 &  \textbf{0.2532} \red{$\blacktriangle$}    &
0.0845                     & 0.1309\red{$\blacktriangle$}                          & 115.6\%                                                   \\
& AP@1000                     & 0.0206                 &       0.0150\blue{$\blacktriangledown$}                   & 0.0355                     & 0.0705\red{$\blacktriangle$}                           & 0.0352                   &        0.1020\red{$\blacktriangle$}                & 0.0210 & 0.0568\red{$\blacktriangle$} &
	0.0401 & 0.0879 \red{$\blacktriangle$} &
 0.0509                     & \textbf{0.1048}\red{$\blacktriangle$}                          & 109.3\%                                                  \\
& AP@10000                    & 0.0101                 &      0.0071\blue{$\blacktriangledown$}                 & 0.0165                     & 0.0301\red{$\blacktriangle$}                           & 0.0162                   &       0.0392\red{$\blacktriangle$}                  & 0.0064 & 0.0218\red{$\blacktriangle$} &
	0.0176 & 0.0313 \red{$\blacktriangle$} &
 0.0255                     & \textbf{0.0521}\red{$\blacktriangle$}                          & 123.4\%     \\
\midrule
\midrule
\multirow{3}{*}{SG}
& AP@100                      & 0.0641                 &     0.0457\blue{$\blacktriangledown$}                   & 0.0252                     & 0.0424\red{$\blacktriangle$}                           & 0.0087                   &        0.0455\red{$\blacktriangle$}         & 0.0074 &   0.0206\red{$\blacktriangle$}      &
	0.0285 & \textbf{0.0832}\red{$\blacktriangle$}  &
 0.0340                    & 0.0517\red{$\blacktriangle$}                          & 147.4\%                                                   \\
& AP@1000                     & 0.0199                 &     0.0148\blue{$\blacktriangledown$}                   & 0.0102                     & 0.0200\red{$\blacktriangle$}                           & 0.0042                   &        0.0191\red{$\blacktriangle$}                 & 0.0028 & 0.0088\red{$\blacktriangle$} & 
	0.0100 &  \textbf{0.0302}\red{$\blacktriangle$} &
0.0130                     & 0.0266\red{$\blacktriangle$}                          & 157.7\%                                                  \\
& AP@10000                    & 0.0050                 &     0.0042\blue{$\blacktriangledown$}                   & 0.0028                     & 0.0069\red{$\blacktriangle$}                           & 0.0017                   &        0.0068\red{$\blacktriangle$}                  & 0.0007 & 0.0024\red{$\blacktriangle$} & 
	0.0031 & 0.0077\red{$\blacktriangle$}  &
0.0053                     & \textbf{0.0084}\red{$\blacktriangle$}                          & 76.80\%                  \\     
\midrule
\midrule       
\multirow{3}{*}{Gowalla}
& AP@100                      & 0.0003                 &     0.0018\red{$\blacktriangle$}                   & 0.0048                     & 0.0131\red{$\blacktriangle$}                           & -                   &        0.0254\red{$\blacktriangle$}         & - &   0.0126\red{$\blacktriangle$}      & 
	0.0054 &  0.0173\red{$\blacktriangle$} &
0.0307                     & \textbf{0.0662}\red{$\blacktriangle$}                          & 252.2\%                                                   \\
& AP@1000                     & 0.0001                 &     0.0010\red{$\blacktriangle$}                   & 0.0007                     & 0.0052\red{$\blacktriangle$}                           & 0.0001                   &        0.0103\red{$\blacktriangle$}                 & - & 0.0040\red{$\blacktriangle$} & 
	0.0018 &  0.0083\red{$\blacktriangle$} &
0.0119                     & \textbf{0.0260}\red{$\blacktriangle$}                          & 592.8\%                                                  \\
& AP@10000                    & -                 &     0.0003\red{$\blacktriangle$}                   & 0.0001                     & 0.0012\red{$\blacktriangle$}                           & 0.0002                   &        0.0041\red{$\blacktriangle$}                  & - & 0.0007\red{$\blacktriangle$} & 
	0.0013 & 0.0056\red{$\blacktriangle$}  &
0.0054                     & \textbf{0.0156}\red{$\blacktriangle$}                          & 1404.9\%                  \\     
\midrule
\midrule                           
\multirow{3}{*}{UCI}
                            & 
AP@40                      & 0.0276                 &        0.0385\red{$\blacktriangle$}               & 0.0399                     & 0.0300\blue{$\blacktriangledown$}                           & 0.0149                   &                 0.0266\red{$\blacktriangle$}        & 0.0042 & 0.0193\red{$\blacktriangle$}  &
	0.0250 & \textbf{0.0733} \red{$\blacktriangle$}   &
 0.0085                     & 0.0436\red{$\blacktriangle$}                          & 176.4\%                                                   \\
                            &
AP@400                     & 0.0115                 &        0.0140\red{$\blacktriangle$}                & 0.0136                     & 0.0146\red{$\blacktriangle$}                           & 0.0059                   &               0.0108\red{$\blacktriangle$}          
& 0.0025 & 0.0111\red{$\blacktriangle$} & 
	0.0052 & \textbf{0.0201} \red{$\blacktriangle$}  &
0.0036                     & 0.0168\red{$\blacktriangle$}                          & 184.8\%                                                  \\
                            &
AP@4000                    & 0.0045                 &        \textbf{0.0060}\red{$\blacktriangle$}                & 0.0049                     & 0.0054\red{$\blacktriangle$}                           & 0.0022                   &               0.0043\red{$\blacktriangle$}        & 0.0015 & 0.0046\red{$\blacktriangle$}   &
	0.0020 & \textbf{0.0060} \red{$\blacktriangle$} &
 0.0031                     & 0.0059\red{$\blacktriangle$}                          & 108.7\%                  \\     
\midrule
\midrule    
\multirow{3}{*}{UMLS}            
                            & 
AP@200                      & 0.2848                 &       0.2966\red{$\blacktriangle$}                & 0.3577                     & 0.3772\red{$\blacktriangle$}                           & 0.3111                   &          0.3656\red{$\blacktriangle$}             
 & 0.3111 & 0.3652\red{$\blacktriangle$} & 
 	0.2968 & 0.3382\red{$\blacktriangle$}   &
 0.3167                     & \textbf{0.3877}\red{$\blacktriangle$}                          & 13.47\%                                                   \\
                            &
AP@600                     & 0.2632                 &        0.2879\red{$\blacktriangle$}                & 0.3391                     & 0.3614\red{$\blacktriangle$}                           & 0.2759                   &          0.3437\red{$\blacktriangle$}                
& 0.2759 & 0.3305\red{$\blacktriangle$} & 
	0.2741 &  0.3287\red{$\blacktriangle$} &
0.2967                     & \textbf{0.3658}\red{$\blacktriangle$}                          & 17.25\%                                                  \\
                            &
AP@1000                    & 0.2468                 &        0.2680\red{$\blacktriangle$}                & 0.3069                     & 0.3403\red{$\blacktriangle$}                           & 0.2523                   &          0.3120\red{$\blacktriangle$}               
& 0.2523 & 0.3098\red{$\blacktriangle$} & 
	0.2489 & 0.3005\red{$\blacktriangle$}  &
0.2734                     & \textbf{0.3439}\red{$\blacktriangle$}                          & 18.74\%                  \\     
\midrule
\midrule  
\multirow{3}{*}{NYC}            
                            & 
AP@100                      & 0.9562                 &       0.3584\blue{$\blacktriangledown$}                & 0.7343                     & 0.7137\blue{$\blacktriangledown$}                           & 0.5015                   &          0.6977\red{$\blacktriangle$}             
 & 0.3240 & 0.7734\red{$\blacktriangle$} & 
 	0.3892 & \textbf{0.9642}\red{$\blacktriangle$}   &
 0.9080                     & 0.9602\red{$\blacktriangle$}                          & 44.33\%                                                   \\
                            &
AP@1000                     & 0.8536                 &        0.2892\blue{$\blacktriangledown$}                & 0.6038                     & 0.6514\red{$\blacktriangle$}                           & 0.4123                   &          0.7144\red{$\blacktriangle$}                
& 0.2396 & 0.7037\red{$\blacktriangle$} & 
	0.2827 &  \textbf{0.9458}\red{$\blacktriangle$} &
0.8868                     & 0.9362\red{$\blacktriangle$}                          & 73.42\%                                                  \\
                            &
AP@10000                    & 0.3233                 &        0.2146\blue{$\blacktriangledown$}                & 0.2673                     & 0.4847\red{$\blacktriangle$}                           & 0.2112                   &          0.5327\red{$\blacktriangle$}               
& 0.1074 & 0.3965\red{$\blacktriangle$} & 
	0.1298 & 0.7466\red{$\blacktriangle$}  &
0.7023                     & \textbf{0.8294}\red{$\blacktriangle$}                          & 157.2\%                  \\    
\midrule
\midrule
\multirow{3}{*}{Yelp}            
                            & 
AP@100                      & 0.2945                 &       0.3456\red{$\blacktriangle$}                & 0.4099                     & 0.6020\red{$\blacktriangle$}                           & 0.2834                   &          \textbf{0.6750}\red{$\blacktriangle$}             
 & 0.4059 & 0.6260\red{$\blacktriangle$} & 
 	0.3976 & 0.5392\red{$\blacktriangle$}   &
 0.4902                     & 0.6084\red{$\blacktriangle$}                          & 52.72\%                                                   \\
                            &
AP@1000                     & 0.1741                 &        0.2655\red{$\blacktriangle$}                & 0.2894                     & 0.5002\red{$\blacktriangle$}                           & 0.2045                   &          \textbf{0.5591}\red{$\blacktriangle$}                
& 0.3078 & 0.5015\red{$\blacktriangle$} & 
	0.2748 &  0.3989\red{$\blacktriangle$} &
0.3870                     & 0.4949\red{$\blacktriangle$}                          & 72.45\%                                                  \\
                            &
AP@10000                    & 0.0763                 &        0.1436\red{$\blacktriangle$}                & 0.1430                     & 0.2706\red{$\blacktriangle$}                           & 0.1138                   &          \textbf{0.3238}\red{$\blacktriangle$}               
& 0.1608 & 0.2640\red{$\blacktriangle$} & 
	0.1388 & 0.2215\red{$\blacktriangle$}  &
0.2022                     & 0.2696\red{$\blacktriangle$}                          & 188.4\%                  \\    
\midrule
\midrule
\multirow{3}{*}{Yahoo}            
                            & 
AP@10000                      & 0.1912                 &       0.1335\blue{$\blacktriangledown$}                & 0.2630                     & 0.6364\red{$\blacktriangle$}                           & 0.3664                   &          0.8746\red{$\blacktriangle$}             
 & 0.0932 & 0.5916\red{$\blacktriangle$} & 
 	0.0239 & 0.9291\red{$\blacktriangle$}   &
 0.9307                     & \textbf{0.9538}\red{$\blacktriangle$}                          & 762.7\%                                                   \\
                            &
AP@50000                     & 0.1491                 &        0.1164\blue{$\blacktriangledown$}                & 0.1946                     & 0.4289\red{$\blacktriangle$}                           & 0.2216                   &          0.7766\red{$\blacktriangle$}                
& 0.0470 & 0.4173\red{$\blacktriangle$} & 
	0.0141 &  0.7892\red{$\blacktriangle$} &
0.8698                     & \textbf{0.9216}\red{$\blacktriangle$}                          & 1106\%                                                  \\
                            &
AP@100000                    & 0.1190                 &        0.0985\blue{$\blacktriangledown$}                & 0.1517                     & 0.3091\red{$\blacktriangle$}                           & 0.1555                   &          0.6357\red{$\blacktriangle$}               
& 0.0329 & 0.3050\red{$\blacktriangle$} & 
	0.0105 & 0.5729\red{$\blacktriangle$}  &
0.7185                     & \textbf{0.8362}\red{$\blacktriangle$}                          & 1099\%                  \\    
\bottomrule
\end{tabular}}
\end{table*}

\textbf{Datasets.}
We use eight real-world datasets summarized in Table~\ref{tab:Description}:
ML-100k{\footnote{Available at \url{https://grouplens.org/datasets/movielens/}\label{foot:ml}}}, 
SG{\footnote{Available at \url{https://github.com/USC-Melady/KDD19-CoSTCo}\label{foot:sg}}}~\cite{li2015rank},
Gowalla{\footnote{Available at \url{https://snap.stanford.edu/data/loc-gowalla.html}\label{foot:gowalla}}}~\cite{cho2011friendship},
UCI{\footnote{Available at \url{https://github.com/chengw07/NetWalk/tree/master}\label{foot:uci}}}~\cite{opsahl2009clustering},
UMLS{\footnote{Available at \url{https://github.com/TimDettmers/ConvE/tree/master?tab=readme-ov-file}\label{foot:umls}}},
NYC{\footnote{Available at \url{https://sites.google.com/site/yangdingqi/home/foursquare-dataset}\label{foot:nyc}}},
Yelp{\footnote{Available at \url{https://www.yelp.com/dataset}\label{foot:yelp}}}, and
Yahoo{\footnote{Available at \url{https://webscope.sandbox.yahoo.com/catalog.php?datatype=r}\label{foot:yahoo}}} datasets.
ML-100k data is movie recommendation data of the form (user, movie, day, month).
SG data and Gowalla data are Point-of-interest (POI) recommendation data whose form is (user, location, time).
For Gowalla data, we filtered out entities (i.e., user, location, time) that have fewer than 30 interactions.
UCI data contains message-sending relationships between students at the University of California.
It has the form (sender, receiver, time).
UMLS is a medical knowledge graph of the form (subject, relation, object).
NYC~\cite{yang2014modeling} is a check-in dataset of the form (user, venue, time).
Yelp and Yahoo are recommendation data of the form (user, business, time) and (user, music, time), respectively.
We randomly split observed interactions into train, validation, and test data with the ratio 7:1:2.

\textbf{Hyperparameter setting.}
We set the batch size and the embedding size of entities to $256$ and $10$, respectively.
We use the number of epochs as $20$ on SG, Gowalla, NYC, Yelp, and Yahoo datasets while using the number of epochs as $100$ on the other datasets. 
We use the following learning rates and weight decays, respectively: $[10^{-1}, 10^{-2}, 10^{-3}]$ and $[0, 10^{-3}, 10^{-5}]$.
We learn all models using the Adam optimizer~\cite{kingma2014adam}.
We perform $L=2$ layer propagation in \method, and use a ReLU function~\cite{fukushima1975cognitron} as an activation function.
We test the performance of each model by running it five times and report the average.

\textbf{Competitors.}
We compare \method with existing tensor factorization methods, MLP (Multi-Layer Perceptron), and hypergraph-based methods: \textbf{CP decomposition~\cite{kiers2000towards}}, \textbf{Tucker decomposition~\cite{tucker1966some}}, \textbf{NCF~\cite{he2017neural} extension for tensor (NCFT)} which extends a neural network-based collaborative filtering (NCF) method to a tensor, \textbf{\costco~\cite{liu2019costco}}, \textbf{M$^2$DMTF~\cite{fan2021multi}}, and \textbf{NeAT~\cite{ahn2024neat}}.
We also compare \method with \textbf{MLP} and \textbf{HyperConv}{\footnote{\url{https://pytorch-geometric.readthedocs.io/en/latest/generated/torch_geometric.nn.conv.HypergraphConv.html}}~\cite{bai2021hypergraph} which is a hypergraph convolution with a 2-layer MLP predictor for hyperedge prediction.
We implement all the competitors using PyTorch.

\textbf{Integration of \method with a TF method.}
In Sections~\ref{subsec:performance},~\ref{subsec:ablation} (except for varied GNN types), and~\ref{subsec:time}, we use \method + CostCo since \method + CostCo achieves better performance than \method + other TF methods as shown in Section~\ref{subsec:compatibility}.

\begin{table*}[!t]
\caption{Performance. \method + CostCo outperforms competitors except for the case, AP@100 on SG data. We report the average and the standard deviation for \method. Bold and underlined texts indicate the best and the second-best performance, respectively.
$-$ indicates a value less than $0.0001$.}
\label{tab:performance}
\resizebox{0.999\textwidth}{!}{
\begin{tabular}{c|rrrrrrrrrrrrrrrr}
\toprule
\multirow{2}{*}{Method}                                         & \multicolumn{2}{c}{ML-100k}                               & \multicolumn{2}{c}{SG}                                    & \multicolumn{2}{c}{Gowalla}                               & \multicolumn{2}{c}{UCI}                                   & \multicolumn{2}{c}{UMLS}   
	& \multicolumn{2}{c}{NYC}                                   & \multicolumn{2}{c}{Yelp}                                 
	& \multicolumn{2}{c}{Yahoo}                                   \\  
\cmidrule(lr){2-3} \cmidrule(lr){4-5} \cmidrule(lr){6-7} \cmidrule(lr){8-9} \cmidrule(lr){10-11} \cmidrule(lr){12-13} \cmidrule(lr){14-15} \cmidrule(lr){16-17}
                                                                & \multicolumn{1}{l}{AP@100} & \multicolumn{1}{l}{AP@10000} & \multicolumn{1}{l}{AP@100} & \multicolumn{1}{l}{AP@10000} & \multicolumn{1}{l}{AP@100} & \multicolumn{1}{l}{AP@10000} & \multicolumn{1}{l}{AP@40} & \multicolumn{1}{l}{AP@4000} & \multicolumn{1}{l}{AP@200} & \multicolumn{1}{l}{AP@1000} &
	                                                                \multicolumn{1}{l}{AP@1000} & \multicolumn{1}{l}{AP@10000} &
	                                                                \multicolumn{1}{l}{AP@1000} & \multicolumn{1}{l}{AP@10000} &
	                                                                 \multicolumn{1}{l}{AP@50000} & \multicolumn{1}{l}{AP@100000} \\
                                                                \midrule
\multicolumn{1}{c|}{CP}                                                              & 0.0271                     & 0.0101                       & \textbf{0.0641}                     & 0.0050                       & 0.0003                     & -                       & 0.0276                     & 0.0045                       & 0.2848                     & 0.2468     
& 0.8536 & 0.3233 & 0.0747 & 0.0307 & 0.1491 & 0.1190 \\
\multicolumn{1}{c|}{Tucker}                                                          & 0.0566                     & 0.0165                       & 0.0252                     & 0.0028                       & 0.0048                     & 0.0001                       & \underline{0.0399}                     & 0.0049                       & 0.3577                     & \underline{0.3069}                       
& 0.6038 & 0.2673 & 0.1741 & 0.0763 & 0.1946 & 0.1517 \\
\multicolumn{1}{c|}{NCFT}                                                            & 0.0553                     & 0.0162                       & 0.0087                     & 0.0017                       & -                          & 0.0002                            & 0.0149                     & 0.0022                       & 0.3111                     & 0.2523                       
& 0.4123 & 0.2112 & 0.2045 & 0.1138 & 0.2216 & 0.1555 \\
\multicolumn{1}{c|}{M$^2$DMTF}                                                          & 0.0459                     & 0.0064                       & 0.0074                     & 0.0007                       & -                          & -                            & 0.0042                     & 0.0015                       & 0.2774                     & 0.1698                       
& 0.2396 & 0.1074 & 0.3078 & 0.1608 & 0.0470 & 0.329 \\
\multicolumn{1}{c|}{MLP}                                                             & 0.0789                     & 0.0173                       & 0.0246                     & 0.0046                       & 0.0240                     & 0.0020                       & 0.0165                     & 0.0028                       & \underline{0.3598}                     & 0.2807                       
& 0.7156 & 0.4261 & \underline{0.4378} & \underline{0.2342} & 0.5130 & 0.3649 \\
\multicolumn{1}{c|}{HyperGCN}                                                        & 0.0623                     & 0.0121                       & 0.0111                     & 0.0019                       & 0.0042                     & 0.0008                       & 0.0341                     & \underline{0.0054}                       & 0.3290                     & 0.2790                      
& 0.7646 & 0.4509 & 0.3961 & 0.2106 & 0.5180 & 0.3816\\
\multicolumn{1}{c|}{CostCo}                                                          & \underline{0.0845}                     & \underline{0.0255}                       & 0.0340                     & \underline{0.0053}                       & \underline{0.0307}                     & \underline{0.0054}                       & 0.0085                     & 0.0031                       & 0.3167                     & 0.2734                       
& \underline{0.8868} & \underline{0.7023} & 0.3870 & 0.2022 & \underline{0.8698} & \underline{0.7185} \\
\multicolumn{1}{c|}{NeAT}                                                          & 0.0806                     & 0.0176                       &    0.0285                  & 0.0031                       &  0.0054                    & 0.0013                       & 0.0250                     & 0.0020                       & 0.2968                     & 0.2489                       
& 0.2827 & 0.1298 & 0.2748 & 0.1388 & 0.0140 & 0.0105 \\
\midrule
\rowcolor{beige}
 \begin{tabular}[c]{@{}c@{}}\textbf{TCN+CostCo}\\ \textbf{(Proposed)} \end{tabular} &     \begin{tabular}[r]{@{}r@{}}\textbf{0.1309}\\ $\pm$ \textbf{0.037}\end{tabular}    &     \begin{tabular}[r]{@{}r@{}}\textbf{0.0521}\\ $\pm$ \textbf{0.005}\end{tabular}    & \begin{tabular}[r]{@{}r@{}}\underline{0.0517}\\ $\pm$ {0.018}\end{tabular}    &     \begin{tabular}[r]{@{}r@{}}\textbf{0.0084}\\ $\pm$ \textbf{0.002}\end{tabular}    &       \begin{tabular}[r]{@{}r@{}}\textbf{0.0662}\\ $\pm$ \textbf{0.047}\end{tabular}    &          \begin{tabular}[r]{@{}r@{}}\textbf{0.0156}\\ $\pm$ \textbf{0.002}\end{tabular}    &          \begin{tabular}[r]{@{}r@{}}\textbf{0.0436}\\ $\pm$ \textbf{0.008}\end{tabular}    &       \begin{tabular}[r]{@{}r@{}}\textbf{0.0059}\\ $\pm$ \textbf{0.001}\end{tabular}    &          \begin{tabular}[r]{@{}r@{}}\textbf{0.3877}\\ $\pm$ \textbf{0.039}\end{tabular}    &   \begin{tabular}[r]{@{}r@{}}\textbf{0.3439}\\ $\pm$ \textbf{0.008}\end{tabular}        
 &   \begin{tabular}[r]{@{}r@{}}\textbf{0.9362}\\ $\pm$ \textbf{0.009}\end{tabular}
 &   \begin{tabular}[r]{@{}r@{}}\textbf{0.8294}\\ $\pm$ \textbf{0.018}\end{tabular}
 &   \begin{tabular}[r]{@{}r@{}}\textbf{0.4949}\\ $\pm$ \textbf{0.074}\end{tabular}
 &   \begin{tabular}[r]{@{}r@{}}\textbf{0.2696}\\ $\pm$ \textbf{0.030}\end{tabular}
  &   \begin{tabular}[r]{@{}r@{}}\textbf{0.9216}\\ $\pm$ \textbf{0.006}\end{tabular} 
    &   \begin{tabular}[r]{@{}r@{}}\textbf{0.8362}\\ $\pm$ \textbf{0.008}\end{tabular} \\
 \midrule
 \rowcolor{beige}
 \textbf{Improvement} & \textbf{54.9}\% & \textbf{104.3\%} & -19.3\% & \textbf{58.4\%} & \textbf{215.6\%} & \textbf{288.9\%} & \textbf{9.27\%} & \textbf{9.25\%} & \textbf{7.75\%} & \textbf{12.0\%} 
 & \textbf{5.57\%} & \textbf{18.1\%} & \textbf{13.0\%} & \textbf{15.1\%}  & \textbf{5.98\%} & \textbf{16.3\%}  \\
\bottomrule
\end{tabular}}
\end{table*}

\textbf{Metrics.}
We use the following metric to evaluate the performance of \method and its competitors: AP@$k$ (Average Precision@$k$).
\begin{align}
        \small
		AP@k = \frac{1}{min(k, |\Omega_{test}|)} \sum_{i=1}^{k}{\text{Precision@i} * rel(i)}
	\end{align}
where $\text{Precision}@i = \frac{|\Omega_{prediction,i} \cap \Omega_{test}|}{i}$, $\Omega_{test}$ is a testset, $\Omega_{prediction,i}$ is a set of top-$i$ predicted interactions, and 
\begin{align}
    rel(i) = \begin{cases}
  1 & \text{if $i$-th ranked interaction is in $\Omega_{test}$} \\
  0 & \text{otherwise}.
\end{cases}
\end{align}
AP@$k$ not only considers the number of test interactions but also ranks of them in top-$k$ interactions.
%

\textbf{Evaluation Procedure.}
As described in Section~\ref{subsec:hoir}, we first train a model using the observed interactions in the training set.
For datasets such as ML-100k, SG, Gowalla, UCI, and UMLS, we predict the scores for all unobserved interactions using the trained model, select the top-$k$ interactions with the highest scores, and evaluate AP@$k$ using a test set and the top-$k$ interactions.
For NYC, Yelp, and Yahoo datasets, we predict the scores for the unobserved interactions in a candidate set that consists of test interactions and randomly sampled interactions due to a large number of unobserved interactions (i.e., 317 billion, 120 billion, and 1138 billion).
Note that the size of the candidate set is $1000\times$ larger than the number of test interactions.
We evaluate AP@$k$ based on these scores.

\vspace{-2mm}
\subsection{Compatibility of \method (Q1)}
\label{subsec:compatibility}
We evaluate the compatibility of \method by adapting it to various tensor factorization methods.
Table~\ref{tab:effectiveness} shows that \method is highly applicable to existing tensor factorization methods.
\method improves the performance of traditional TF methods for all $k$ on all datasets except for a few cases: (1) CP decomposition on ML-100k, SG,  NYC, and Yahoo data and (2) Tucker decomposition for AP@40 on UCI data and AP@100 on NYC data.
\method allows traditional TF methods to utilize enriched latent vectors that are generated by the aggregation of information from neighbor entities.
For Gowalla and Yahoo datasets, which are large tensors, the performance gain from \method is substantial, with improvements of up to 1404.9\%.
This is because NCFT, M$^2$DMTF, and NeAT fail to predict top-$k$ potential interactions without \method.
\method helps these methods prevent prediction failures by generating expressive latent vectors.
The average improvements are lower on UMLS data than the other datasets since the densities of UMLS data are relatively high (i.e., $0.7964\%$).
\method degrades the performance of CP decomposition on ML-100k, SG,  NYC, and Yahoo data since its predictor is too simple to capture complex patterns from the output of \method.
Note that the predictor of CP decomposition consists only of element-wise product and summation without any learnable parameters.
Nevertheless, \method remains effective for CP decomposition on the other datasets, demonstrating its compatibility and impact.

\begin{table}[!t]
\caption{Ablation study for feature transformation and nonlinear activation function.
See detail in Section~\ref{subsec:ablation}.}
\label{tab:ablation_fn}
\resizebox{0.49\textwidth}{!}{
\begin{tabular}{c|rrrrrr}
\toprule
\multirow{2}{*}{Method}                                         & \multicolumn{2}{c}{ML-100k}                               & \multicolumn{2}{c}{SG}                                    & \multicolumn{2}{c}{Gowalla}                                                              \\  
\cmidrule(lr){2-3} \cmidrule(lr){4-5} \cmidrule(lr){6-7}
                                                                & \multicolumn{1}{l}{AP@100} & \multicolumn{1}{l}{AP@10000} & \multicolumn{1}{l}{AP@100} & \multicolumn{1}{l}{AP@10000} & \multicolumn{1}{l}{AP@100} & \multicolumn{1}{l}{AP@10000}  \\
                                                                \midrule
\multicolumn{1}{c|}{TCN w/ F}                                                              & \textbf{0.1585}                     & 0.0406                       & \underline{0.0537}                     & 0.0071                       & \underline{0.0267}                     & \underline{0.0089}                                          \\
\multicolumn{1}{c|}{TCN w/ N}                                                          & 0.1257                      & \underline{0.0407}                       & 0.0455                     & \underline{0.0087}                       & 0.0228                     & 0.0087                                             \\
\multicolumn{1}{c|}{TCN w/ F+N}                                                            & 0.0880                     & 0.0309                       & \textbf{0.0601}                     & \textbf{0.0092}                       & 0.0182                          & 0.0055                                              \\
\midrule
\rowcolor{beige}
 \begin{tabular}[c]{@{}c@{}}\textbf{TCN} \end{tabular} &     \begin{tabular}[r]{@{}r@{}}\underline{0.1309}\\ $\pm$ {0.037}\end{tabular}    &     \begin{tabular}[r]{@{}r@{}}\textbf{0.0521}\\ $\pm$ \textbf{0.005}\end{tabular}    & \begin{tabular}[r]{@{}r@{}}{0.0517}\\ $\pm$ {0.018}\end{tabular}    &     \begin{tabular}[r]{@{}r@{}}{0.0084}\\ $\pm$ {0.002}\end{tabular}    &       \begin{tabular}[r]{@{}r@{}}\textbf{0.0662}\\ $\pm$ \textbf{0.047}\end{tabular}    &          \begin{tabular}[r]{@{}r@{}}\textbf{0.0156}\\ $\pm$ \textbf{0.002}\end{tabular}    \\
\bottomrule
\end{tabular}}
\end{table}

\begin{table}[!t]
\caption{Ablation study for operations used in integrating \method with a traditional TF method.
See detail in Section~\ref{subsec:ablation}.}
\label{tab:ablation_hmp}
\resizebox{0.49\textwidth}{!}{
\begin{tabular}{c|rrrrrr}
\toprule
\multirow{2}{*}{Method}                                         & \multicolumn{2}{c}{ML-100k}                               & \multicolumn{2}{c}{SG}                                    & \multicolumn{2}{c}{Gowalla} \\  
\cmidrule(lr){2-3} \cmidrule(lr){4-5} \cmidrule(lr){6-7}
                                                                & \multicolumn{1}{l}{AP@100} & \multicolumn{1}{l}{AP@10000} & \multicolumn{1}{l}{AP@100} & \multicolumn{1}{l}{AP@10000} & \multicolumn{1}{l}{AP@100} & \multicolumn{1}{l}{AP@10000}  \\
                                                                \midrule
\multicolumn{1}{c|}{TCN w/ Sum}                                                              & \underline{0.0976}                     & 0.0200                       & 0.0392                     & 0.0062                       & \underline{0.0369}                     & \underline{0.0095}                         \\
\multicolumn{1}{c|}{TCN w/ Mean}                                                          & 0.0724                     & 0.0201                       & \underline{0.0499}                     & \underline{0.0067}                       & 0.0285                     & 0.0090                                  \\
\multicolumn{1}{c|}{TCN w/ Product}                                                            & \underline{0.0976}                     & \underline{0.0255}                       & 0.0100                     & 0.0028                       & 0.0038                          & 0.0022                       \\
\midrule
\rowcolor{beige}
 \begin{tabular}[c]{@{}c@{}}\textbf{TCN}\\ \textbf{w/ Concatenation} \end{tabular} &     \begin{tabular}[r]{@{}r@{}}\textbf{0.1309}\\ $\pm$ \textbf{0.037}\end{tabular}    &     \begin{tabular}[r]{@{}r@{}}\textbf{0.0521}\\ $\pm$ \textbf{0.005}\end{tabular}    & \begin{tabular}[r]{@{}r@{}}\textbf{0.0517}\\ $\pm$ \textbf{0.018}\end{tabular}    &     \begin{tabular}[r]{@{}r@{}}\textbf{0.0084}\\ $\pm$ \textbf{0.002}\end{tabular}    &       \begin{tabular}[r]{@{}r@{}}\textbf{0.0662}\\ $\pm$ \textbf{0.047}\end{tabular}    &          \begin{tabular}[r]{@{}r@{}}\textbf{0.0156}\\ $\pm$ \textbf{0.002}\end{tabular}        \\
\bottomrule
\end{tabular}}
\end{table}


\vspace{-2mm}

\subsection{Performance (Q2)}
\label{subsec:performance}

We evaluate the performance of \method combined with \costco and competitors in terms of AP@k since \method + CostCo achieves the highest performance among \method + TF methods.
Table~\ref{tab:performance} shows that \method + \costco outperforms competitors on all datasets except for AP@100 on SG data, achieving up to $288.9\%$ higher performance compared to existing methods.
Notably, \method significantly enhances performance on ML-100k and Gowalla datasets as \method generates enriched latent vectors by the effective aggregation of information from neighboring entities using a tensor-originated clique-expanded graph.
Traditional TF methods fail to learn effective latent vectors of entities due to high sparsity while HyperGCN cannot fuse latent vectors effectively.
Furthermore, the performance gap widens further when $k$ is large since \method helps to better predict interactions within the top-k interactions, particularly those at the lower ranks.
These results imply that \method generates enriched latent vectors even for entities with limited interactions.

\begin{table*}[!h]
\caption{
Comparison of the impact of \method on 3rd-order and 4th-order tensors in ML-100k.
\method is more effective on the 4th-order tensor than the 3rd-order tensor in terms of the average improvement.
}
\label{tab:impact_order}
\resizebox{0.99\textwidth}{!}{
\begin{tabular}{ll|rr|rr|rr|rr|rr|rr|r}
\toprule
\multicolumn{2}{c|}{Setting}                              & \multicolumn{2}{c|}{CP}                               & \multicolumn{2}{c|}{Tucker}                                   & \multicolumn{2}{c|}{NCFT} & \multicolumn{2}{c|}{M$^2$DMTF}   & \multicolumn{2}{c|}{NeAT}            & \multicolumn{2}{c|}{CostCo}                                   & \multicolumn{1}{c}{\multirow{2}{*}{\begin{tabular}[c]{@{}c@{}}Average\\ Improvement\end{tabular}}} \\
\multicolumn{1}{c}{Dataset} & \multicolumn{1}{c|}{Metric} & 
\multicolumn{1}{c}{w/o \method} & \multicolumn{1}{c|}{w/ \method} & 
\multicolumn{1}{c}{w/o \method} & \multicolumn{1}{c|}{w/ \method} & 
\multicolumn{1}{c}{w/o \method} & \multicolumn{1}{c|}{w/ \method} & 
\multicolumn{1}{c}{w/o \method} & \multicolumn{1}{c|}{w/ \method} & 
\multicolumn{1}{c}{w/o \method} & \multicolumn{1}{c|}{w/ \method} & 
\multicolumn{1}{c}{w/o \method} & \multicolumn{1}{c|}{w/ \method} & \multicolumn{1}{c}{}                                     \\
\midrule
\multirow{3}{*}{\begin{tabular}[c]{@{}c@{}}3rd-order tensor \\ (ML-100k)\end{tabular}}
& AP@100                      & 0.2548                 &       0.1646\blue{$\blacktriangledown$}                   & 0.2194                     & 0.2841\red{$\blacktriangle$}                           & 0.1972                   &         0.2523\red{$\blacktriangle$}              & 0.1976 & 0.2146\red{$\blacktriangle$} & 
0.1653 &  0.1695\red{$\blacktriangle$}  &
0.2749                     & \textbf{0.3105}\red{$\blacktriangle$}                          & 7.68\%                                                   \\
& AP@1000                     & 0.1746                 &       0.1308\blue{$\blacktriangledown$}                   & 0.1594                     & 0.1923\red{$\blacktriangle$}                           & 0.1688                   &        0.1956\red{$\blacktriangle$}                & 0.1389 & 0.1834\red{$\blacktriangle$} & 
 0.1262 &  0.1562\red{$\blacktriangle$}  &
0.1906                     & \textbf{0.2368}\red{$\blacktriangle$}                          & 15.24\%                                                  \\
& AP@10000                    & 0.0807                 &      0.0689\blue{$\blacktriangledown$}                 & 0.0766                     & 0.0928\red{$\blacktriangle$}                           & 0.0816                   &       0.0992\red{$\blacktriangle$}                  & 0.0650 & 0.0879\red{$\blacktriangle$} & 
0.0682 &  0.0869\red{$\blacktriangle$}  &
0.0956                     & \textbf{0.1179}\red{$\blacktriangle$}                          & 18.97\%     \\  
\midrule
\midrule
\multirow{3}{*}{\begin{tabular}[c]{@{}c@{}}4th-order tensor \\ (ML-100k)\end{tabular}} 
& AP@100                      & 0.0271                 &       0.0275\red{$\blacktriangle$}                   & 0.0566                     & 0.1309\red{$\blacktriangle$}                           & 0.0553                   &         0.1586\red{$\blacktriangle$}              & 0.0459 & 0.0943\red{$\blacktriangle$} & 
	0.0806 &  \textbf{0.2532} \red{$\blacktriangle$}    &
0.0845                     & 0.1309\red{$\blacktriangle$}                          & 115.6\%                                                   \\
& AP@1000                     & 0.0206                 &       0.0150\blue{$\blacktriangledown$}                   & 0.0355                     & 0.0705\red{$\blacktriangle$}                           & 0.0352                   &        0.1020\red{$\blacktriangle$}                & 0.0210 & 0.0568\red{$\blacktriangle$} &
	0.0401 & 0.0879 \red{$\blacktriangle$} &
 0.0509                     & \textbf{0.1048}\red{$\blacktriangle$}                          & 109.3\%                                                  \\
& AP@10000                    & 0.0101                 &      0.0071\blue{$\blacktriangledown$}                 & 0.0165                     & 0.0301\red{$\blacktriangle$}                           & 0.0162                   &       0.0392\red{$\blacktriangle$}                  & 0.0064 & 0.0218\red{$\blacktriangle$} &
	0.0176 & 0.0313 \red{$\blacktriangle$} &
 0.0255                     & \textbf{0.0521}\red{$\blacktriangle$}                          & 123.4\%     \\
\bottomrule
\end{tabular}}
\end{table*}

\begin{table}[!t]
\centering
\caption{Ablation study for GNN types on ML-100k data.
See detail in Section~\ref{subsec:ablation}.}
\label{tab:ablation}
\resizebox{0.9\columnwidth}{!}{%
\begin{tabular}{c|rrr}  
\toprule
{Method} & AP@100 & AP@1000 & AP@10000 \\
\midrule
CostCo (Baseline)             &        0.0845     &   0.0509     &   0.0255 \\
\midrule
GAT~\cite{velivckovic2017graph} with CostCo                 &        0.0957     &   0.0722     &   0.0412 \\
ChebConv~\cite{defferrard2016convolutional} with CostCo                 &        \textbf{0.1532}     &   \underline{0.1000}     &   \underline{0.0425} \\
APPNP~\cite{gasteiger2018predict} with CostCo             &        \underline{0.1355}     &   0.0757     &   0.0359 \\
FAConv~\cite{bo2021beyond} with CostCo             &        0.1011     &   0.0615     &   0.0324 \\
AntiSymmetric~\cite{gravina2022anti} with CostCo             &        0.1100     &   0.0551     &   0.0233 \\
\midrule
\rowcolor{beige}
\textbf{\method} with CostCo                   &        0.1309     &   \textbf{0.1048}     &   \textbf{0.0521} \\
\midrule
\end{tabular}
}
\resizebox{0.9\columnwidth}{!}{%
\begin{tabular}{c|rrr}  
\toprule
{Method} & AP@100 & AP@1000 & AP@10000 \\
\midrule
NCFT (Baseline)             &        0.0553     &   0.0352     &   0.0162 \\
\midrule
GAT~\cite{velivckovic2017graph} with NCFT               &        0.0337     &   0.0325     &   0.0203 \\
ChebConv~\cite{defferrard2016convolutional} with NCFT             &        \underline{0.0942}     &   \underline{0.0612}     &   \underline{0.0274} \\
APPNP~\cite{gasteiger2018predict} with NCFT             &        0.0808     &   0.0494     &   0.0247 \\
FAConv~\cite{bo2021beyond} with NCFT             &        0.0820     &   0.0660     &   0.0347 \\
AntiSymmetric~\cite{gravina2022anti} with NCFT             &        0.0957     &   0.0459     &   0.0198 \\
\midrule
\rowcolor{beige}
\textbf{\method} with NCFT                   &        \textbf{0.1586}     &   \textbf{0.1020}     &   \textbf{0.0392} \\
\bottomrule                                                                                        
\end{tabular}
}
\end{table}

\vspace{-2mm}
\subsection{Ablation Study (Q3)}
\label{subsec:ablation}
We evaluate the effectiveness of various operations used in \method:
(1) feature transformation and nonlinear activation function,
(2) mean, sum, product, and concatenation operations,
and (3) varied GNN types.

\textbf{Feature transformation matrix and nonlinear activation function.}
We evaluate the effectiveness of the following components that can be used in TCN:
(1) feature transformation (F) and (2) nonlinear activation (N).
We use the ReLU function for the nonlinear activation.
Table~\ref{tab:ablation_fn} shows that \method achieves high performance stably on all the datasets.
Although TCN+F+N achieves better performance on SG dataset, 
it has low AP@k on other datasets due to the learning instability.
The nonlinear and transformation functions do not have a significant impact on our problem since we need to utilize the latent vectors to be learned as input for a GNN, which poses training difficulties with nonlinear and transformation functions; in graph data, GNNs typically utilize the provided feature vectors as input during training.

\textbf{Mean, sum, element-wise product, and concatenation.}
We compare operations used in integrating \method with \costco (Section~\ref{subsec:tcn_based_tf}).
As shown in Table~\ref{tab:ablation_hmp}, \method with the concatenation outperforms \method with the other operations on all datasets except for UCI data.
The concatenation with \method outperforms other operations with \method since the concatenation provides input with preservation of the information of each matrix $\mat{F}^{[l]}$ for $l=0,..., L$, allowing a TF predictor to capture complex patterns.

\textbf{Varied GNN types.}
We evaluate the performance by varying GNN types on ML-100k.
We utilize CostCo and NCFT as TF methods, and \method (i.e., a linear propagation), GAT~\cite{velivckovic2017graph}, ChebConv~\cite{defferrard2016convolutional}, APPNP~\cite{gasteiger2018predict}, FAConv~\cite{bo2021beyond}, and AntiSymmetric~\cite{gravina2022anti} as message passing mechanisms.
Table~\ref{tab:ablation} shows that the integration of message-passing mechanisms achieves consistent performance enhancements for CostCo and NCFT with a few exceptions: (1) GAT with NCFT for AP@100 and AP@1000, and (2) AntiSymmetric with CostCo for AP@10000.
Note that we found limited performance gains with GAT due to training instability. 
Furthermore, \method achieves better performance than the other GNNs since \method is straightforward to optimize rather than them.
These results imply that (1) our tensor-originated graph helps to improve the performance of traditional TF methods regardless of GNN types, and (2) \method is very competitive among GNNs. 


\vspace{-2mm}
\subsection{Impact of \method on 3rd-order and 4th-order Tensors (Q4)}
\label{subsec:impact_order}
We evaluate the impact of \method by varying the order of tensors in ML-100k: 3rd-order tensor and 4th-order tensor.
We use the 4th-order tensor in Table~\ref{tab:Description} and construct a 3rd-order tensor where its form is (user, movie, month), its size is $943\times 1663 \times 8$, and the number of observations is $999,811$.
Compared to the 4th-order tensor, this 3rd-order tensor has some missing observations and items during the preprocessing stage. However, this does not significantly affect the comparison between the two tensors.

Table~\ref{tab:impact_order} shows that \method has a greater impact on the 4th-order tensor than the 3rd-order tensor in terms of average improvement.
This performance gap is because \method works effectively on highly sparse tensors as discussed in Sections~\ref{subsec:compatibility} and~\ref{subsec:performance}.
For the 4th-order tensor, which is sparser than the 3rd-order tensor, existing tensor factorization methods struggle to learn effective latent vectors of entities that have only a few interactions.
\method universally improves the performance of existing tensor factorization methods on the 4th-order tensor by generating expressive latent vectors of entities.

\begin{table}[!t]
\caption{Comparison between \costco and \method+\costco in terms of training time and inference time.
}
\label{tab:running_time}
\resizebox{0.49\textwidth}{!}{
\begin{tabular}{l|rr|rr|rr}
\toprule
\multicolumn{1}{c|}{}                              & \multicolumn{2}{c|}{ML-100k}                               & \multicolumn{2}{c|}{SG}       & \multicolumn{2}{c}{Gowalla}                                                            \\
 \multicolumn{1}{c|}{Time} & \multicolumn{1}{c}{\costco} & \multicolumn{1}{c|}{\textbf{TCN}+\costco} & \multicolumn{1}{c}{\costco} & \multicolumn{1}{c|}{\textbf{TCN}+\costco} & \multicolumn{1}{c}{\costco} & \multicolumn{1}{c}{\textbf{TCN}+\costco}                            \\
\midrule
 Training time (sec)                      & 10.82                 &     12.307                & 9.734                     & 10.63   & 25.937                     & 98.841                                                                                         \\
Inference time (sec)                     & 5.377                 &           11.030                   & 2.705                     & 4.027    & 428.41                     & 869.39 \\
\bottomrule
\end{tabular}}
\end{table}

\vspace{-2mm}
\subsection{Computational time of \method (Q5)}
\label{subsec:time}
We analyze how much \method causes additional overhead when combined with \costco.
We measure training time per epoch while measuring inference time for computing scores of all unobserved interactions on a workstation with Tesla V100.
Table~\ref{tab:running_time} shows that \method performs well in GPU-based environments and does not impose a significant overhead in terms of training and inference times.
Note that the overhead by \method includes $L$ matrix multiplications $\mat{A}\mat{F}^{[l]}$ and the increase of the size of a TF predictor due to the concatenation.
\method + CostCo requires more training and inference time than CostCo on Gowalla dataset, but not to the extent that it leads to learning and inference failure.
In addition, \method + CostCo demonstrates the performance enhancement in terms of AP@k.

\vspace{-2mm}
\subsection{Computational Time and Space Requirements for Graphs (Q6)}
\label{subsec:time_and_space}

We report running time and space cost for constructing a hypergraph and clique-expanded graph from a sparse tensor.
Table~\ref{tab:run_and_space} shows that we obtain the hypergraphs and clique-expanded graphs at reasonable time and space.
As described in Section~\ref{subsec:hypergraph}, hypergraph construction and clique-expanded graph construction have complexities of $O(NE)$ and $O(N^2E)$, respectively, but the experimental difference in space cost is not as large as the theoretical difference. 
This is because clique-expanded graphs can have fewer non-zeros, depending on the sparse patterns of the data, requiring less space cost.
Compared to training and inference time in Section~\ref{subsec:time}, the construction time does not impose a significant overhead on the overall process.

\section{Related Works}
\label{sec:related}

In this section, we review related works on tensor factorization, hyperedge prediction methods, and tensor-based applications.

\textbf{Tensor factorization methods.}
Many works have been developed tensor factorization methods for analyzing higher-order tensors.
CP (CANDECOMP/PARAFAC) decomposition~\cite{carroll1970analysis,harshman1970foundations,kiers2000towards}
factorizes a given tensor into factor matrices which construct rank-one tensors as described in~\cite{kolda2009tensor}.
Tucker decomposition~\cite{tucker1966some} factorizes a given tensor into factor matrices and core tensor.
CostCo~\cite{liu2019costco} is a CNN-based tensor factorization model that captures nonlinear interactions between factors.
NTF~\cite{wu2019neural} is an RNN-based tensor factorization model that captures sequential patterns in a real-world tensor.
M$^2$DMTF~\cite{fan2021multi} employs nonlinear transformation for its factor matrices.
We can view M$^2$DMTF as consisting of an MLP encoder and predictor that is the same as the predictor of Tucker decomposition; we experimentally show the compatibility of \method with the MLP encoder of M$^2$DMTF.
NeAT~\cite{ahn2024neat} is a non-linear tensor factorization method with interpretability.
In our experiments, we use only tensor factorization methods that can be easily adapted to our problem;
we do not compare \method with NTF since NTF works only on a temporal tensor which has the time dimension.

\textbf{Hypergraph methods.}
Previous works~\cite{choo2022persistence, gao2020hypergraph, cencetti2021temporal, schawe2022higher} mainly deal with hypergraphs. Note that none of them~\cite{choo2022persistence, cencetti2021temporal, schawe2022higher} handles a tensor structure. Unlike a prior study~\cite{gao2020hypergraph}, which used a tensor structure to handle hypergraphs dynamically, we employ a hypergraph structure to effectively handle sparse tensors.
Many previous works~\cite{zhang2019hyper,yoon2020much,yadati2020nhp} have focused on predicting hyperedges when a hypergraph is given.
Furthermore, Hwang et al.~\cite{hwang2022ahp} and Ko et al.~\cite{ko2023enhancing} enhance hypergraph edge prediction models by utilizing adversarial training and contrastive learning techniques, respectively.
Our work bridges the gap between tensor and hypergraph research, complementary to the aforementioned works.
However, there are still differences in various aspects:
Tensor factorization methods utilize learnable latent vectors as an input while hyperedge prediction methods utilize input node features.
All hyperedges in a tensor-based hypergraph are the same with the size $N$, while real-world hypergraphs have varying sizes of hyperedges.
For a tensor-based hypergraph, all nodes belong to one of the types corresponding to dimensions (e.g., user, item, time).
Therefore, we do not provide an exhaustive comparison with hyperedge prediction methods in this paper.

\begin{table}[!t]
\caption{Running time and space cost for constructing a hypergraph and clique-expanded graph. We measure the time and space cost to construct the CSR (Compressed Sparse Row) format for storing matrices of a hypergraph and clique-expanded graph.
(H) and (C) indicate a hypergraph construction and clique-expanded graph construction, respectively.
}
\label{tab:run_and_space}
\resizebox{0.999\columnwidth}{!}{%
\begin{tabular}{crrrr}	
\toprule
\multicolumn{1}{c}{Data} & \multicolumn{1}{c}{Running time (H)} & \multicolumn{1}{c}{Running time (C)} & \multicolumn{1}{c}{Space (H)} & \multicolumn{1}{c}{Space (C)} \\
\midrule
ML-100k              &        0.01 (sec)                                                                                              &   0.04 (sec)                                                                                                &   4.02 MB                                                                                            
	&   2.74 MB\\	
SG                   &        0.02 (sec)                                                                                              &   0.03 (sec)                                                                                                &   2.57 MB                                                                                            
	&   2.87 MB                                                                                         \\
Gowalla              &        0.04 (sec)                                                                                              &   0.51 (sec)                                                                                                &   20.33 MB                                                                                            
	&   25.48 MB\\
UCI                  &        0.006 (sec)                                                                                              &   0.01 (sec)                                                                                                &   0.56 MB                                                                                            
	&   0.50 MB                                                                                         \\
UMLS                 &        0.002 (sec)                                                                                              &   0.01 (sec)                                                                                                &   0.16 MB                                                                                            
	&   0.11 MB                                                                                         \\
NYC                 &        0.03 (sec)                                                                                              &   0.11 (sec)                                                                                                &   5.60 MB                                                                                            
	&   8.34 MB                                                                                         \\
Yelp                 &        0.04 (sec)                                                                                              &   0.15 (sec)                                                                                                &   8.34 MB                                                                                            
	&   11.56 MB                                                                                         \\	
Yahoo                 &        0.08 (sec)                                                                                              &   0.38 (sec)                                                                                                &   19.51 MB                                                                                            
	&   28.15 MB                                                                                         \\	    
\bottomrule                                                                                        
\end{tabular}
}
\end{table}

\textbf{Tensor-based applications.} 
Many previous works~\cite{lacroix2018canonical,BalazevicAH19,liu2020generalizing} developed tensor decomposition-based methods for knowledge base completion.
Furthermore, Lacroix et al.~\cite{BalazevicAH19} developed a TF-based method for temporal knowledge completion.
Tensor factorization has been widely used for higher-order recommendation~\cite{rendle2010pairwise,fang2015personalized,choi2019s3,chen2020neural}.
Many works~\cite{tomasi2005parafac,acar2011scalable,smith2016exploration,liu2014generalized} have focused on developing tensor completion methods that predict specific values of unobserved entries accurately.
In this paper, we do not compare \method with tensor completion methods since their main concern is to accurately predict missing {\em values} with the mean squared loss (MSE) function rather than identifying the {\em existence} of a given interaction.
Still, our work is complementary to tensor completion methods since tensor completion methods, similarly to tensor factorization methods used in this paper, can also be well-compatible with \method.
In addition to tensor completion, there are a few works that predict future values for a given tensor.
Net$^{3}$~\cite{jing2021network} is developed for missing and future values when a tensor time series and graphs corresponding to each dimension are given.
In contrast to our paper, Net$^{3}$ utilizes GCN (Graph Convolutional Network) for obtaining powerful representation vectors from the given graphs.
TensorCast~\cite{de2017tensorcast} is a coupled tensor factorization model to predict future values accurately and efficiently when a sparse tensor (e.g., Tweeter data of the form (user-hashtag-time)) and the additional information (e.g., Tweeter data of the form (user-user-time)) are given.

\vspace{-2mm}
\section{Discussion and Future Work}
\label{sec:discussion}
We discuss a scalability issue of predicting scores of all unobserved interactions, which suffers from the exponential growth in the number of interactions as the number of entities increases.
For NYC, Yelp, and Yahoo datasets, we evaluate the performance using sampled interactions instead of all unobserved interactions due to a large number of unobserved interactions.
It is noteworthy that this issue, inherent to the top-$k$ prediction problem, is not unique to \method but is commonly encountered by existing tensor factorization methods as well.
Presumably, this issue was a major factor in preventing existing TF methods from deeply dealing with the top-$k$ interaction problem. 
Although we show that \method achieves higher performance on large-scale tensors, such as SG, Gowalla, NYC, Yelp, and Yahoo datasets, which have billion-scale interactions, overcoming this challenge remains a crucial task for future research.
Therefore, our future directions include the improvement of scalability to deal with an order of magnitude larger (i.e., trillion-scale) sparse tensors in the top-$k$ interaction problem.
There is a potential way that addresses the scalability issue:
to develop an efficient approach that finds the top-$k$ interactions without fully computing the scores of all unobserved interactions, and without constructing a sampled set.

\section{Conclusion}
\label{sec:conclusion}

In this paper, we propose \method, an accurate tensor convolutional network that is fully compatible with a variety of tensor factorization methods for top-{\em k} interaction prediction.
Given a sparse tensor, we first find a hypergraph and its clique-expanded graph originating from the tensor.
Then, \method extracts relation-aware latent vectors over the graph and uses them as an input of a tensor factorization predictor.
For finding top-$k$ higher-order interactions in a sparse tensor, we experimentally show that \method with a tensor factorization method consistently achieves the best performance and boosts the performance of various tensor factorization methods.



\bibliographystyle{ACM-Reference-Format}
\balance
\bibliography{mybib}

%

\end{document}